\theoremstyle{plain}
\newtheorem{theorem}{Theorem}[section]
\theoremstyle{definition}
\newtheorem{definition}{Definition}[section]
\newtheorem{assum}{Assumption}[section]
\theoremstyle{remark}
\newtheorem{remark}{Remark}[section]
\crefname{equation}{}{}
\crefname{section}{\S}{\S}
\crefname{subsection}{\S}{\S}
\crefname{subsubsection}{\S}{\S}
\crefname{figure}{Fig.}{Figs.}
\crefname{prop}{Prop.}{Props.}
\crefname{proposition}{Prop.}{Props.}
\crefname{appendix}{Appx.}{Appxs.}
\crefname{algorithm}{Alg.}{Algs.}
\crefname{theorem}{Thm.}{Thms.}
\crefname{conjecture}{Conj.}{Conjs.}
\crefname{researchquestion}{Q.}{Qs.}
\crefname{definition}{Defn.}{Defns.}
\crefname{corollary}{Cor.}{Cors.}
\crefname{lem}{Lem.}{Lems.}
\crefname{table}{Tab.}{Tabs.}
\crefname{assum}{Assum.}{Assums.}
\crefname{example}{Ex.}{Exs.}
\newenvironment{talign*}
 {\let\displaystyle\textstyle\csname align*\endcsname}
 {\endalign}
\newenvironment{talign}
 {\let\displaystyle\textstyle\csname align\endcsname}
 {\endalign}
\newcommand{\textfrac}[2]{{\textstyle\frac{#1}{#2}}}
\def\reals{\mathbb{R}}
\newcommand{\eps}{\varepsilon}
\newcommand{\sstext}[1]{\ \ \text{#1}\ \ }
\newcommand{\E}{\mathbb{E}}
\newcommand{\Var}{\mathrm{Var}}
\newcommand{\Cov}{\mathrm{Cov}}
\newcommand{\parent}{\textup{Pa}}
\newcommand{\defeq}{\triangleq}
\newcommand{\half}{\frac{1}{2}}
\def\indep{\perp\!\!\!\perp}
\newcommand{\inner}[2]{\langle{#1},{#2}\rangle}
\newcommand{\ie}{i.e.\@\xspace}
\newcommand{\cf}{cf.\@\xspace} %
\newcommand{\inv}[1]{\ensuremath{#1^{-1}}}
\newcommand{\diag}[1]{\ensuremath{\mathrm{diag}\parenthesis{#1}}}
\newcommand{\mat}[1]{\ensuremath{\boldsymbol{\mathrm{#1}}}}
\newcommand{\norm}[1]{\ensuremath{\left\Vert#1\right\Vert}}
\newcommand{\abs}[1]{\ensuremath{\left|#1\right|}}
\newcommand{\logabsdet}[1]{\ensuremath{\log\left|\det#1\right|}}
\newcommand{\expectation}[1]{\ensuremath{\mathbb{E}_{#1}}}
\newcommand{\rr}[1]{\ensuremath{\mathbb{R}^{#1}}}
\newcommand{\parenthesis}[1]{\ensuremath{\left(#1\right)}}
\newcommand{\brackets}[1]{\ensuremath{\left[#1\right]}}
\newglossaryentry{aux}{
    name        = \ensuremath{\mathrm{\boldsymbol{u}}} ,
    description = {auxiliary variable} ,
    type        = abbrev,
}
\newglossaryentry{im}{
    name        = \ensuremath{\mathrm{Im}} ,
    description = {image space} ,
    type        = abbrev,
}
\newglossaryentry{ker}{
    name        = \ensuremath{\mathrm{Ker}} ,
    description = {kernel space} ,
    type        = abbrev,
}
\newglossaryentry{kronecker}{
    name        = \ensuremath{\otimes} ,
    description = {Kronecker product} ,
    type        = abbrev,
}
\newglossaryentry{loss}{
    name        = \ensuremath{\mathcal{L}} ,
    description = {loss function} ,
    type        = abbrev,
}
\newglossaryentry{numenv}{
    name        = \ensuremath{\abs{E}} ,
    description = {number of environments} ,
    type        = abbrev,
}
\newglossaryentry{lr}{
    name        = \ensuremath{\eta} ,
    description = {learning rate} ,
    type        = abbrev,
}
\newglossaryentry{hypersphere}{
    name        = \ensuremath{\mathcal{S}} ,
    description = {hypersphere} ,
    type        = abbrev,
}
\newglossaryentry{dec}{
    name        = \ensuremath{\boldsymbol{f}} ,
    description = {decoder map $\gls{Latent}\to\gls{Obs}$} ,
    type        = abbrev,
}
\newglossaryentry{deccomp}{
    name        = \ensuremath{f} ,
    description = {decoder map component} ,
    type        = abbrev,
}
\newglossaryentry{enc}{
    name        = \ensuremath{\boldsymbol{g}} ,
    description = {encoder map $\gls{Obs}\to\gls{Latent}$} ,
    type        = abbrev,
}
\newglossaryentry{numdata}{
    name        = \ensuremath{n} ,
    description = {number of samples} ,
    type        = abbrev,
}
\newglossaryentry{observations}{type=abbrev,name=Observations,description={\nopostdesc}}
\newglossaryentry{obs}{
    name        = \ensuremath{\boldsymbol{x}} ,
    description = {observation vector} ,
    type        = abbrev,
    parent      = observations,
}
\newglossaryentry{obscomp}{
    name        = \ensuremath{x} ,
    description = {observation single component} ,
    type        = abbrev,
    parent      = observations,
}
\newglossaryentry{Obs}{
    name        = \ensuremath{\mathcal{X}} ,
    description = {observation space} ,
    type        = abbrev,
    parent      = observations,
}
\newglossaryentry{obsdim}{
    name        = \ensuremath{D} ,
    description = {dimensionality of the observation space \gls{Obs}} ,
    type        = abbrev,
    parent      = observations,
}
\newglossaryentry{obsmat}{
    name        = \ensuremath{\mat{X}} ,
    description = {observation matrix of \rr{\gls{numdata}\times\gls{obsdim}}} ,
    type        = abbrev,
    parent      = observations,
}
\newglossaryentry{obspos}{
    name        = \ensuremath{\tilde{\boldsymbol{x}}} ,
    description = {positive observation vector} ,
    type        = abbrev,
    parent      = observations,
}
\newglossaryentry{obsneg}{
    name        = \ensuremath{{\boldsymbol{x}}^{-}} ,
    description = {negative observation vector} ,
    type        = abbrev,
    parent      = observations,
}
\newglossaryentry{labels}{type=abbrev,name=Labels,description={\nopostdesc}}
\newglossaryentry{label}{
    name        = \ensuremath{\boldsymbol{y}} ,
    description = {label vector} ,
    type        = abbrev,
    parent      = labels,
}
\newglossaryentry{labelhat}{
    name        = \ensuremath{\widehat{\boldsymbol{y}}} ,
    description = {estimated label vector} ,
    type        = abbrev,
    parent      = labels,
}
\newglossaryentry{labelcomp}{
    name        = \ensuremath{y} ,
    description = {label component} ,
    type        = abbrev,
    parent      = labels,
}
\newglossaryentry{labelcomphat}{
    name        = \ensuremath{\widehat{y}} ,
    description = {label component} ,
    type        = abbrev,
    parent      = labels,
}
\newglossaryentry{labelset}{
    name        = \ensuremath{\mathcal{Y}} ,
    description = {label set} ,
    type        = abbrev,
    parent      = labels,
}
\newglossaryentry{labeldim}{
    name        = \ensuremath{C} ,
    description = {number of classes in the label set \gls{labelset}} ,
    type        = abbrev,
    parent      = labels,
}
\newglossaryentry{latents}{type=abbrev,name=Latents,description={\nopostdesc}}
\newglossaryentry{latent}{
    name        = \ensuremath{\boldsymbol{z}} ,
    description = {latent vector} ,
    type        = abbrev,
    parent     = latents,
}
\newglossaryentry{latentcomp}{
    name        = \ensuremath{z} ,
    description = {latent single component} ,
    type        = abbrev,
    parent     = latents,
}
\newglossaryentry{Latent}{
    name        = \ensuremath{\mathcal{Z}} ,
    description = {latents} ,
    type        = abbrev,
    parent     = latents,
}
\newglossaryentry{latentdim}{
    name        = \ensuremath{d} ,
    description = {dimensionality of the latent space \gls{Latent}} ,
    type        = abbrev,
    parent     = latents,
}
\newglossaryentry{latentmat}{
    name        = \ensuremath{\mat{Z}} ,
    description = {latent matrix of \rr{\gls{numdata}\times\gls{latentdim}}} ,
    type        = abbrev,
    parent      = latents,
}
\newglossaryentry{latentpos}{
    name        = \ensuremath{\tilde{\boldsymbol{z}}} ,
    description = {positive latent vector} ,
    type        = abbrev,
    parent      = latents,
}
\newglossaryentry{latentneg}{
    name        = \ensuremath{\boldsymbol{z}^{-}} ,
    description = {negative latent vector} ,
    type        = abbrev,
    parent      = observations,
}
\newglossaryentry{sigmaz}{
    name        = \ensuremath{\boldsymbol{\sigma}_{\gls{latentcomp}}} ,
    description = {std of \gls{latentcomp}} ,
    type        = abbrev,
    parent     = latents,
}
\newglossaryentry{content}{
    name        = \ensuremath{\boldsymbol{z}^{c}} ,
    description = {content latent vector} ,
    type        = abbrev,
    parent     = latents,
}
\newglossaryentry{contentcomp}{
    name        = \ensuremath{z^{c}} ,
    description = {content latent single component} ,
    type        = abbrev,
    parent     = latents,
}
\newglossaryentry{Content}{
    name        = \ensuremath{\mathcal{Z}^{c}} ,
    description = {content} ,
    type        = abbrev,
    parent     = latents,
}
\newglossaryentry{contentdim}{
    name        = \ensuremath{d_{c}} ,
    description = {dimensionality of \gls{content}} ,
    type        = abbrev,
    parent     = latents,
}
\newglossaryentry{sigmac}{
    name        = \ensuremath{\boldsymbol{\sigma}_{c}} ,
    description = {std of \gls{contentcomp}} ,
    type        = abbrev,
    parent     = latents,
}
\newglossaryentry{style}{
    name        = \ensuremath{\boldsymbol{z}^{s}} ,
    description = {style latent vector} ,
    type        = abbrev,
    parent     = latents,
}
\newglossaryentry{stylecomp}{
    name        = \ensuremath{z^{s}} ,
    description = {style latent single component} ,
    type        = abbrev,
    parent     = latents,
}
\newglossaryentry{Style}{
    name        = \ensuremath{\mathcal{Z}^{s}} ,
    description = {style} ,
    type        = abbrev,
    parent     = latents,
}
\newglossaryentry{styledim}{
    name        = \ensuremath{d_{s}} ,
    description = {dimensionality of \gls{style}} ,
    type        = abbrev,
    parent     = latents,
}
\newglossaryentry{sigmas}{
    name        = \ensuremath{\boldsymbol{\sigma}_{s}} ,
    description = {std of \gls{stylecomp}} ,
    type        = abbrev,
    parent     = latents,
}
\newglossaryentry{modality}{
    name        = \ensuremath{\boldsymbol{z}^{m}} ,
    description = {modality-specific latent vector} ,
    type        = abbrev,
    parent     = latents,
}
\newglossaryentry{modalitycomp}{
    name        = \ensuremath{z^{m}} ,
    description = {modality-specific  latent single component} ,
    type        = abbrev,
    parent     = latents,
}
\newglossaryentry{Modality}{
    name        = \ensuremath{\mathcal{Z}^{m}} ,
    description = {latent subspace of \gls{modality}} ,
    type        = abbrev,
    parent     = latents,
}
\newglossaryentry{modalitydim}{
    name        = \ensuremath{d_{m}} ,
    description = {dimensionality of \gls{modality}} ,
    type        = abbrev,
    parent     = latents,
}
\newglossaryentry{algebra}{type=abbrev,name=Algebra,description={\nopostdesc}}
\newglossaryentry{identity}{
    name        = \ensuremath{\boldsymbol{\mathrm{I}}} ,
    description = { identity matrix} ,
    type        = abbrev,
    parent      = algebra,
}
\newcommand{\Id}[1]{\ensuremath{\gls{identity}_{#1}}}
\newglossaryentry{ones}{
    name        = \ensuremath{\boldsymbol{\mathrm{1}}} ,
    description = {a vector of ones} ,
    type        = abbrev,
    parent      = algebra,
}
\newglossaryentry{zeros}{
    name        = \ensuremath{\boldsymbol{\mathrm{0}}} ,
    description = {a vector of zeros} ,
    type        = abbrev,
    parent      = algebra,
}
\newglossaryentry{jacobian}{
    name        = \ensuremath{\boldsymbol{\mathrm{J}}} ,
    description = {Jacobian matrix} ,
    type        = abbrev,
    parent      = algebra,
}
\newglossaryentry{hessian}{
    name        = \ensuremath{\boldsymbol{\mathrm{H}}} ,
    description = {Hessian matrix} ,
    type        = abbrev,
    parent      = algebra,
}
\newglossaryentry{d}{
    name        = \ensuremath{\boldsymbol{\mathrm{D}}} ,
    description = {diagonal matrix} ,
    type        = abbrev,
    parent      = algebra,
}
\newglossaryentry{o}{
    name        = \ensuremath{\boldsymbol{\mathrm{O}}},
    description = {orthogonal matrix} ,
    type        = abbrev,
    parent      = algebra,
}
\newglossaryentry{scalar}{
    name        = \ensuremath{\alpha} ,
    description = {scalar field} ,
    type        = abbrev,
    parent      = algebra,
}
\newglossaryentry{perm}{
    name        = \ensuremath{\mathbb{P}} ,
    description = {group of permutation matrices} ,
    type        = abbrev,
    parent      = algebra,
}
\newglossaryentry{p}{
    name        = \ensuremath{\mat{P}},
    description = {permutation matrix} ,
    type        = abbrev,
    parent      = algebra,
}
\newglossaryentry{prob}{type=abbrev,name=Probability theory,description={\nopostdesc}}
\newglossaryentry{cov}{
    name        = \ensuremath{\boldsymbol{\mathrm{\Sigma}}},
    description = {covariance matrix} ,
    type        = abbrev,
    parent      = prob,
}
\newglossaryentry{mean}{
    name        = \ensuremath{\boldsymbol{\mu}},
    description = {mean} ,
    type        = abbrev,
    parent      = prob,
}
\newglossaryentry{std}{
    name        = \ensuremath{\boldsymbol{\sigma}},
    description = {standard deviation} ,
    type        = abbrev,
    parent      = prob,
}
\newglossaryentry{entropy}{
    name        = \ensuremath{\mathrm{H}} ,
    description = {entropy} ,
    type        = abbrev,
    parent      = prob,
}
\newglossaryentry{expfamparam}{
    name        = \ensuremath{\boldsymbol{\theta}} ,
    description = {parameter of exponential family} ,
    type        = abbrev,
    parent      = prob,
}
\newglossaryentry{expfamnatparam}{
    name        = \ensuremath{\boldsymbol{\eta}} ,
    description = {natural parameter of exponential family} ,
    type        = abbrev,
    parent      = prob,
}
\newglossaryentry{expfamsuffstat}{
    name        = \ensuremath{T(\gls{obs})} ,
    description = {sufficient statistics of exponential family} ,
    type        = abbrev,
    parent      = prob,
}
\newglossaryentry{expfamlogpartition}{
    name        = \ensuremath{A} ,
    description = {log parition function of exponential family (depends on \gls{expfamnatparam})} ,
    type        = abbrev,
    parent      = prob,
}
\newglossaryentry{wishart}{
    name        = \ensuremath{\mathcal{W}} ,
    description = {Wishart distribution} ,
    type        = abbrev,
    parent      = prob,
}
\newglossaryentry{normal}{
    name        = \ensuremath{\mathcal{N}} ,
    description = {normal distribution} ,
    type        = abbrev,
    parent      = prob,
}
\newglossaryentry{matrixnormal}{
    name        = \ensuremath{\mathcal{MN}} ,
    description = {normal distribution} ,
    type        = abbrev,
    parent      = prob,
}
\newglossaryentry{causal}{type=abbrev,name=Causality,description={\nopostdesc}}
\newglossaryentry{cause}{
    name        = \ensuremath{\boldsymbol{N}},
    description = {noise (independent)  variable vector} ,
    type        = abbrev,
    parent      = causal,
}
\newglossaryentry{causecomp}{
    name        = \ensuremath{N},
    description = {noise (independent)  variable component} ,
    type        = abbrev,
    parent      = causal,
}
\newglossaryentry{Cause}{
    name        = \ensuremath{\mathcal{N}} ,
    description = {space of the noise variables} ,
    type        = abbrev,
    parent      = causal,
}
\newglossaryentry{effect}{
    name        = \ensuremath{\boldsymbol{X}},
    description = {observation vector} ,
    type        = abbrev,
    parent      = causal,
}
\newglossaryentry{effectcomp}{
    name        = \ensuremath{X},
    description = {observation component} ,
    type        = abbrev,
    parent      = causal,
}
\newglossaryentry{Effect}{
    name        = \ensuremath{\mathcal{X}} ,
    description = {space of the effect variables} ,
    type        = abbrev,
    parent      = causal,
}
\newglossaryentry{pa}{
    name        = \ensuremath{\boldsymbol{Pa}},
    description = {parents of \gls{effect}} ,
    type        = abbrev,
    parent      = causal,
}
\newglossaryentry{nondesc}{
    name        = \ensuremath{\boldsymbol{ND}},
    description = {non-descendants of \gls{effect}} ,
    type        = abbrev,
    parent      = causal,
}
\newglossaryentry{nondescminuspa}{
    name        = \ensuremath{\boldsymbol{\overline{ND}}},
    description = {non-descendants of \gls{effect}, excluding its parents} ,
    type        = abbrev,
    parent      = causal,
}
\newglossaryentry{semf}{
    name        = \ensuremath{\boldsymbol{f}},
    description = {structural assignment in \glspl{sem}} ,
    type        = abbrev,
    parent      = causal,
}
\newglossaryentry{semfcomp}{
    name        = \ensuremath{f},
    description = {a component of \gls{semf}} ,
    type        = abbrev,
    parent      = causal,
}
\newglossaryentry{order}{
    name        = \ensuremath{\pi},
    description = {causal ordering} ,
    type        = abbrev,
    parent      = causal,
}
\newglossaryentry{indexset}{
    name        = \ensuremath{\mathcal{I}},
    description = {index set} ,
    type        = abbrev,
    parent      = causal,
}
\newglossaryentry{adjacency}{
    name        = \ensuremath{\boldsymbol{\mathcal{A}}} ,
    description = {adjacency matrix of a \glspl{sem}} ,
    type        = abbrev,
    parent      = causal,
}
\newglossaryentry{connectivity}{
    name        = \ensuremath{\boldsymbol{\mathcal{C}}} ,
    description = {connectivity matrix of a \glspl{sem}} ,
    type        = abbrev,
    parent      = causal,
}
\newglossaryentry{dependency}{
    name        = \ensuremath{\mathcal{D}} ,
    description = {dependency matrix of a \glspl{sem}} ,
    type        = abbrev,
    parent      = causal,
}
\newglossaryentry{seq}{
    name        = \ensuremath{\sim_{\acrshort{dag}}} ,
    description = {structural equivalence} ,
    type        = abbrev,
    parent      = causal,
}
\newglossaryentry{contrastive}{type=abbrev,name=Contrastive Learning,description={\nopostdesc}}
\newglossaryentry{clloss}{
    name        = \ensuremath{\mathcal{L}_{\mathrm{\acrshort{cl}}}} ,
    description = {contrastive loss function} ,
    type        = abbrev,
    parent      = contrastive,
}
\newglossaryentry{alignloss}{
    name        = \ensuremath{\mathcal{L}_{\mathrm{align}}} ,
    description = {alignment term in \gls{clloss}} ,
    type        = abbrev,
    parent      = contrastive,
}
\newglossaryentry{uniformloss}{
    name        = \ensuremath{\mathcal{L}_{\mathrm{uniform}}} ,
    description = {uniformity term in \gls{clloss}} ,
    type        = abbrev,
    parent      = contrastive,
}
\newglossaryentry{temp}{
    name        = \ensuremath{{\boldsymbol{\tau}}} ,
    description = {temperature in \gls{clloss}} ,
    type        = abbrev,
    parent      = contrastive,
}
\newglossaryentry{numneg}{
    name        = \ensuremath{M} ,
    description = {number of negative samples} ,
    type        = abbrev,
    parent      = contrastive,
}
\newglossaryentry{vaes}{type=abbrev,name=\acrlongpl{vae},description={\nopostdesc}}
\newglossaryentry{q}{
    name        = \ensuremath{q_{\gls{encpar}}(\gls{latent}|\gls{obs})} ,
    description = {variational posterior of the \acrshort{vae}, mapping $\gls{obs}\mapsto\gls{latent}$ parametrized by \gls{encpar}} ,
    type        = abbrev,
    parent      = vaes,
}
\newglossaryentry{qopt}{
    name        = \ensuremath{q_{\widehat{\gls{encpar}}}(\gls{latent}|\gls{obs})} ,
    description = {optimal variational posterior of the \acrshort{vae}, mapping $\gls{obs}\mapsto\gls{latent}$ parametrized by \gls{encpar}} ,
    type        = abbrev,
    parent      = vaes,
}
\newglossaryentry{encpar}{
    name        = \ensuremath{\boldsymbol{\phi}} ,
    description = {parameters of the variational posterior \gls{q}} ,
    type        = abbrev,
    parent      = vaes,
}
\newglossaryentry{encparopt}{
    name        = \ensuremath{\widehat{\boldsymbol{\phi}}} ,
    description = {optimal parameters of the variational posterior \gls{q}} ,
    type        = abbrev,
    parent      = vaes,
}
\newglossaryentry{var_family}{
    name        = \ensuremath{\mathcal{Q}} ,
    description = {distribution family of the variational posterior \gls{q} } ,
    type        = abbrev,
    parent      = vaes,
}
\newglossaryentry{pz}{
    name        = \ensuremath{p_0(\gls{latent})} ,
    description = {latent prior distribution} ,
    type        = abbrev,
    parent      = vaes,
}
\newglossaryentry{px}{
    name        = \ensuremath{p_{\gls{decpar}}(\gls{obs})} ,
    description = {marginal likelihood } ,
    type        = abbrev,
    parent      = vaes,
}
\newglossaryentry{pdata}{
    name        = \ensuremath{p(\gls{obs})} ,
    description = {data distribution } ,
    type        = abbrev,
    parent      = vaes,
}
\newglossaryentry{mean_enc}{
    name        = \ensuremath{\mu_{\gls{latent}|\gls{obs}}} ,
    description = {mean encoder of the \acrshort{vae}, \ie, $\expectation{\gls{latent}\sim\gls{q}}\parenthesis{\gls{latent}}$, mapping $\gls{obs}\mapsto\gls{latent}$} ,
    type        = abbrev,
    parent      = vaes,
}
\newglossaryentry{var_cov}{
    name        = \ensuremath{\gls{cov}^{\gls{encpar}}_{\gls{latent}|\gls{obs}}} ,
    description = {covariance matrix of \gls{q}} ,
    type        = abbrev,
    parent      = vaes,
}
\newglossaryentry{sigmak}{
    name        = \ensuremath{{\sigma}_{k}^{\gls{encpar}}(\gls{obs})^{2}} ,
    description = {variance of \gls{q} in dimension $k$} ,
    type        = abbrev,
    parent      = vaes,
}
\newglossaryentry{sigmaopt}{
    name        = \ensuremath{\boldsymbol{\sigma}^{\gls{encparopt}}(\gls{obs})^{2}} ,
    description = {optimal variance of \gls{q}} ,
    type        = abbrev,
    parent      = vaes,
}
\newglossaryentry{sigmaoptk}{
    name        = \ensuremath{{\sigma}_{k}^{\gls{encparopt}}(\gls{obs})^{2}} ,
    description = {optimal variance of \gls{q} in dimension $k$} ,
    type        = abbrev,
    parent      = vaes,
}
\newglossaryentry{mu}{
    name        = \ensuremath{\boldsymbol{\mu}^{\gls{encpar}}(\gls{obs})} ,
    description = {mean of \gls{q}} ,
    type        = abbrev,
    parent      = vaes,
}
\newglossaryentry{muk}{
    name        = \ensuremath{{\mu}_{k}^{\gls{encpar}}(\gls{obs})} ,
    description = {mean of \gls{q} in dimension $k$} ,
    type        = abbrev,
    parent      = vaes,
}
\newglossaryentry{muopt}{
    name        = \ensuremath{\boldsymbol{\mu}^{\gls{encparopt}}(\gls{obs})} ,
    description = {optimal mean of \gls{q}} ,
    type        = abbrev,
    parent      = vaes,
}
\newglossaryentry{muoptk}{
    name        = \ensuremath{{\mu}_{k}^{\gls{encparopt}}(\gls{obs})} ,
    description = {optimal mean of \gls{q} in dimension $k$} ,
    type        = abbrev,
    parent      = vaes,
}
\newglossaryentry{gamma}{
    name        = \ensuremath{\gamma} ,
    description = {square root of the precision of the \gls{vae} decoder} ,
    type        = abbrev,
    parent      = vaes,
}
\newglossaryentry{betaloss}{
    name        = \ensuremath{\mathcal{L}_{\beta}} ,
    description = {\betavae loss function} ,
    type        = abbrev,
    parent      = vaes,
}
\newglossaryentry{pxz}{
    name        = \ensuremath{p_{\gls{decpar}}(\gls{obs}|\gls{latent})} ,
    description = {conditional distribution of the decoded samples of the \acrshort{vae}, mapping $\gls{latent}\mapsto\gls{obs}$, parametrized by \gls{decpar}} ,
    type        = abbrev,
    parent      = vaes,
}
\newglossaryentry{pzx}{
    name        = \ensuremath{p_{\gls{decpar}}(\gls{latent}|\gls{obs})} ,
    description = {true posterior distribution of the decoded samples of the \acrshort{vae}, mapping $\gls{obs}\mapsto\gls{latent}$, parametrized by \gls{decpar}} ,
    type        = abbrev,
    parent      = vaes,
}
\newglossaryentry{decpar}{
    name        = \ensuremath{\boldsymbol{\theta}} ,
    description = {parameters of the decoder \gls{pxz}} ,
    type        = abbrev,
    parent      = vaes,
}
\newglossaryentry{invdeccomp}{
    name        = \ensuremath{{g}^{\gls{decpar}}} ,
    description = {inverse decoder component} ,
    type        = abbrev,
    parent      = vaes,
}
\newglossaryentry{invdec}{
    name        = \ensuremath{\mathrm{\boldsymbol{g}}^{\gls{decpar}}} ,
    description = {inverse decoder} ,
    type        = abbrev,
    parent      = vaes,
}
\newglossaryentry{distortion}{
    name        = \ensuremath{D} ,
    description = {Distortion of \cite{alemi_fixing_2018}, the same as the reconstruction term of the \acrshort{elbo} for $\beta=1$} ,
    type        = abbrev,
    parent      = vaes,
}
\newglossaryentry{rate}{
    name        = \ensuremath{R} ,
    description = {Rate of \cite{alemi_fixing_2018}, the same as the \acrshort{kld} term of the \acrshort{elbo} for $\beta=1$} ,
    type        = abbrev,
    parent      = vaes,
}
\newglossaryentry{lindec}{
    name        = \ensuremath{\boldsymbol{\mathrm{W}}} ,
    description = {weight matrix of a linear decoder} ,
    type        = abbrev,
    parent      = vaes,
}
\newglossaryentry{linenc}{
    name        = \ensuremath{\boldsymbol{\mathrm{V}}} ,
    description = {weight matrix of a linear encoder} ,
    type        = abbrev,
    parent      = vaes,
}
\newglossaryentry{imas}{type=abbrev,name=\acrlong{ima},description={\nopostdesc}}
\newglossaryentry{mixing}{
    name        = \ensuremath{\inv{g}} ,
    description = {inverse of the learned unmixing of the \acrshort{ima}, mapping $\gls{latent}\mapsto\gls{obs}$ } ,
    type        = abbrev,
    parent      = imas,
}
\newglossaryentry{lin_mixing}{
    name        = \ensuremath{A} ,
    description = {ground-truth \emph{linear} mixing process of the \acrshort{ima}, mapping $\gls{latent}\mapsto\gls{obs}$ } ,
    type        = abbrev,
    parent      = imas,
}
\newglossaryentry{cima_local}{
    name        = \ensuremath{c_{\acrshort{ima}}} ,
    description = {local \acrshort{ima} contrast } ,
    type        = abbrev,
    parent      = imas,
}
\newglossaryentry{cima_global}{
    name        = \ensuremath{C_{\acrshort{ima}}} ,
    description = {global \acrshort{ima} contrast } ,
    type        = abbrev,
    parent      = imas,
}
\newglossaryentry{source}{
    name        = \ensuremath{s} ,
    description = {sources (\acrshort{ica} equivalent of latents)} ,
    type        = abbrev,
    parent      = imas,
}
\newglossaryentry{rec_s}{
    name        = \ensuremath{\boldsymbol{y}} ,
    description = {reconstructed sources} ,
    type        = abbrev,
    parent      = imas,
}
\newglossaryentry{p_source}{
    name        = \ensuremath{p_{\gls{latent}}} ,
    description = {source distribution} ,
    type        = abbrev,
    parent      = imas,
}
\newglossaryentry{imaloss}{
    name        = \ensuremath{\mathcal{L}_{\gls{ima}}} ,
    description = {\gls{ima} loss function} ,
    type        = abbrev,
    parent      = imas,
}
\NewDocumentCommand{\cima}{ O{\gls{dec}} O{\gls{latent}}  }{\ensuremath{\gls{cima_local} ( #1\!,  #2) }\xspace}
\NewDocumentCommand{\Cima}{ O{\gls{dec}} O{\ensuremath{p_0}  }}{\ensuremath{\gls{cima_global} ( #1,  #2) }\xspace}
\newglossaryentry{gps}{type=abbrev,name=\acrlongpl{gp},description={\nopostdesc}}
\newglossaryentry{gpr}{
    name        = \ensuremath{\mathcal{GP}} ,
    description = {Gaussian Process} ,
    type        = abbrev,
    parent      = gps,
}
\newglossaryentry{gpker}{
    name        = \ensuremath{k} ,
    description = {kernel function} ,
    type        = abbrev,
    parent      = gps,
}
\newglossaryentry{gpcov}{
    name        = \ensuremath{\mathcal{K}} ,
    description = {$\gls{numdata}\times\gls{numdata}$ covariance matrix of a \acrshort{gp}} ,
    type        = abbrev,
    parent      = gps,
}
\newacronym{mpa}{MPA}{Measure Preserving Automorphism}
\newacronym{iid}{i.i.d.}{independent and identically distributed}
\newacronym{vmf}{vMF}{von Mises-Fisher}
\newacronym{nivmf}{nivMF}{non-isotropic von Mises-Fisher}
\newacronym{pd}{PD}{positive definite}
\newacronym{psd}{PSD}{positive semi-definite}
\newacronym{nd}{ND}{negative definite}
\newacronym{nsd}{NSD}{negative semi-definite}
\newacronym{ode}{ODE}{Ordinary Differential Equation}
\newacronym{pde}{PDE}{Partial Differential Equation}
\newacronym{lhs}{LHS}{left hand side}
\newacronym{rhs}{RHS}{right hand side}
\newacronym{rv}{RV}{random variable}
\newacronym{ae}{AE}{AutoEncoder}
\newacronym{lae}{LAE}{Linear Autoencoder}
\newacronym{vae}{VAE}{Variational Autoencoder}
\newacronym{cvvae}{CV-VAE}{Constant-Variance Variational Autoencoder}
\newacronym{ivae}{iVAE}{Identifiable Variational Autoencoder}
\newacronym{rae}{RAE}{Regularized Autoencoder}
\newacronym{grae}{GRAE}{Gaussian Regularized Autoencoder}
\newacronym{lvm}{LVM}{latent variable model}
\newacronym[longplural=Gaussian Processes]{gp}{GP}{Gaussian Process}
\newacronym{gplvm}{GPLVM}{Gaussian Process Latent Variable Model}
\newacronym{rbf}{RBF}{Radial Basis Function}
\newcommand{\betavae}{$\beta$-\gls{vae}\xspace}
\newacronym{kld}{KL}{Kullback-Leibler Divergence}
\newacronym{elbo}{{\text{\upshape ELBO}}}{evidence lower bound}
\newacronym{pca}{PCA}{Principal Component Analysis}
\newacronym{ppca}{PPCA}{Probabilistic Principal Component Analysis}
\newacronym{ebm}{EBM}{Energy-Based Model}
\newacronym{cca}{CCA}{Canonical Correlation Analysis}
\newacronym{mi}{MI}{Mutual Information}
\newacronym[longplural=Identifiable Exchangeable Mechanisms]{iem}{IEM}{Identifiable Exchangeable Mechanisms}
\newacronym[longplural=Independent Causal Mechanisms]{icm}{ICM}{Independent Causal Mechanisms}
\newacronym{sms}{SMS}{Sparse Mechanism Shift}
\newacronym{mss}{MSS}{Mechanism Shift Score}
\newacronym{sem}{SEM}{Structural Equation Model}
\newacronym{lingam}{LiNGAM}{Linear Non-Gaussian Acyclic Model}
\newacronym{dag}{DAG}{Directed Acyclic Graph}
\newacronym{anm}{ANM}{Additive Noise Model}
\newacronym{cd}{CD}{Causal Discovery}
\newacronym{crl}{CRL}{Causal Representation Learning}
\newacronym{hmm}{HMM}{Hidden Markov Model}
\newacronym{plr}{PLR}{Partially Linear Regression}
\newacronym{it}{IT}{identifiability theory}
\newacronym{sith}{SITh}{Singular Identifiability Theory}
\newacronym{lt}{LT}{learning theory}
\newacronym{slt}{SLT}{Singular Learning Theory}
\newacronym{ica}{ICA}{Independent Component Analysis}
\newacronym{nlica}{NLICA}{nonlinear Independent Component Analysis}
\newacronym{bss}{BSS}{Blind Source Separation}
\newacronym{ima}{{\text{\upshape IMA}}}{Independent Mechanism Analysis}
\newacronym{igci}{IGCI}{Information Geometric Causal Inference}
\newacronym{cdf}{CdF}{Causal de Finetti}
\newacronym{nce}{NCE}{Noise Contrastive Estimation}
\newacronym{pcl}{PCL}{Permutation-Contrastive Learning}
\newacronym{tcl}{TCL}{Time-Contrastive Learning}
\newacronym{gencl}{GCL}{Generalized Contrastive Learning}
\newacronym{iia}{IIA}{Independent Innovation Analysis}
\newacronym{ar}{AR}{autoregressive}
\newacronym{var}{VAR}{Vector autoregressive}
\newacronym{nvar}{NVAR}{Nonlinear Vector AutoRegressive}
\newacronym{ai}{AI}{Artificial Intelligence}
\newacronym{ml}{ML}{Machine Learning}
\newacronym{dml}{DML}{Double Machine Learning}
\newacronym{oml}{OML}{Orthogonal Machine Learning}
\newacronym{homl}{HOML}{Higher-order Orthogonal Machine Learning}
\newacronym{dl}{DL}{Deep Learning}
\newacronym{rl}{RL}{Reinforcement Learning}
\newacronym{mbrl}{MBRL}{Model-Based Reinforcement Learning}
\newacronym{rlhf}{RLHF}{Reinforcement Learning from Human Feedback}
\newacronym{ssl}{SSL}{self-supervised learning}
\newacronym{cl}{CL}{Contrastive Learning}
\newacronym{dcl}{DCL}{Debiased Contrastive Learning}
\newacronym{scl}{SCL}{Spectral Contrastive Learning}
\newacronym{gcl}{GCL}{Graph Contrastive Learning}
\newacronym{alphacl}{$\alpha$-CL}{$\alpha$-Contrastive Learning}
\newacronym{arcl}{ArCL}{Augmentation-robust Contrastive Learning}
\newacronym{fce}{FCE}{Flow Contrastive Estimation}
\newacronym{pid}{PID}{parametric instance discrimination}
\newacronym{vince}{VINCE}{Variational InfoNCE}
\newacronym{rince}{RINCE}{Robust InfoNCE}
\newacronym{aggnce}{AggNCE}{Aggregated InfoNCE}
\newacronym{mcinfonce}{MCInfoNCE}{Monte-Carlo InfoNCE}
\newacronym{gmc}{GMC}{Geometric Multimodal Contrastive Learning}
\newacronym{looc}{LooC}{Leave-one-out Contrastive Learning}
\newacronym{npc}{NPC}{Negative-Positive Coupling}
\newacronym{cpc}{CPC}{Contrastive Predictive Coding}
\newacronym{nlp}{NLP}{Natural Language Processing}
\newacronym{gdl}{GDL}{Geometric Deep Learning}
\newacronym{msn}{MSN}{Masked Siamese Networks}
\newacronym{ifm}{IFM}{Implicit Feature Modification}
\newacronym{dnn}{DNN}{Deep Neural Network}
\newacronym{nn}{NN}{Neural Network}
\newacronym{ann}{ANN}{Artificial Neural Network}
\newacronym{fm}{FM}{Foundation Model}
\newacronym{llm}{LLM}{Large Language Model}
\newacronym{vlm}{VLM}{Vision-Language Model}
\newacronym{pcfg}{PCFG}{Probabilistic Context-Free Grammar}
\newacronym{icl}{ICL}{in-context learning}
\newacronym{nc}{NC}{Neural Collapse}
\newacronym{cdt}{CDT}{Class-Dependent Temperature}
\newacronym{mlp}{MLP}{Multi-Layer Perceptron}
\newacronym{fc}{FC}{Fully Connected}
\newacronym{strnn}{StrNN}{Structured Neural Network}
\newacronym{cn}{conv}{Convolutional layer}
\newacronym{cnn}{CNN}{Convolutional Neural Network}
\newacronym{gnn}{GNN}{Graph Neural Network}
\newacronym{ssm}{SSM}{State Space Model}
\newacronym{rnn}{RNN}{Recurrent Neural Network}
\newacronym{lstm}{LSTM}{Long Short-Term Memory}
\newacronym{gru}{GRU}{Gated Recurrent Unit}
\newacronym{relu}{ReLU}{Rectified Linear Unit}
\newacronym{bn}{BN}{Batch Normalization}
\newacronym{dbn}{DBN}{Decorrelated Batch Normalization}
\newacronym{gan}{GAN}{Generative Adversarial Network}
\newacronym{diayn}{DIAYN}{Diversity Is All You Need}
\newacronym{dads}{DADS}{DYnamics-Aware Discovery of Skills}
\newacronym{sac}{SAC}{Soft Actor Critic}
\newacronym{a2c}{A2C}{Advantage Actor Critic}
\newacronym{sgd}{SGD}{Stochastic Gradient Descent}
\newacronym{adam}{ADAM}{Adaptive Moment Estimation}
\newacronym{svd}{SVD}{Singular Value Decomposition}
\newacronym{wls}{WLS}{Weighted Least Squares}
\newacronym{sam}{SAM}{Sharpness-Aware Minimization}
\newacronym{samba}{SAMBA}{SAM-Based Autoencoder}
\newacronym{vi}{VI}{Variational Inference}
\newacronym{mfvi}{MFVI}{Mean Field Variational Inference}
\newacronym[longplural=data generating processes]{dgp}{DGP}{data generating process}
\newacronym{map}{MAP}{Maximum A Posteriori}
\newacronym{mle}{MLE}{maximum likelihood estimation}
\newacronym{etf}{ETF}{Equiangular Tight Frame}
\newacronym{mse}{MSE}{Mean Squared Error}
\newacronym{mae}{MAE}{Mean Absolute Error}
\newacronym{ce}{{\text{\upshape CE}}}{cross entropy}
\newacronym{sid}{SID}{Structural Intervention Distance}
\newacronym{shd}{SHD}{Structural Hamming Distance}
\newacronym{mcc}{MCC}{Mean Correlation Coefficient}
\newacronym{mig}{MIG}{Mutual Information Gap}
\newacronym{dci}{DCI}{Disentanglement Completeness Informativeness score}
\newacronym{arc}{ARC}{Average Relative Confusion}
\newacronym{acr}{ACR}{Average Confusion Ratio}
\newacronym{api}{API}{Application Programming Interface}
\newacronym{cpu}{CPU}{Central Processing Unit}
\newacronym{gpu}{GPU}{Graphics Processing Unit}
\newacronym{lti}{LTI}{Linear Time-Invariant}
\newacronym{zoh}{ZOH}{Zero-Order Hold}
\newacronym{gt}{{\text{\upshape GT}}}{ground truth}
\newacronym{ood}{OOD}{out-of-distribution}
\newacronym{oov}{OOV}{out-of-variablme}
\newacronym{fsm}{FSM}{Finite State Machine}
\newacronym{rasp}{RASP}{Restricted-Access Sequence Processing Language}
\newacronym{ntk}{NTK}{Neural Tangent Kernel}
\newacronym{as}{a.s.}{almost surely}
\newacronym{alev}{a.e.}{almost everywhere}
\newacronym{sos}{SOS}{start-of-sequence}
\newacronym{eos}{EOS}{end-of-sequence}
\newacronym{prh}{PRH}{Platonic Representation Hypothesis}
  \def\gls#1{<#1>}%
  \def\glspl#1{<#1>}%
  \def\acrshort#1{<#1>}%
  \def\acrlong#1{<#1>}%
  \def\acrfull#1{<#1>}%
\definecolor{figblue}{HTML}{4A90E2}
\definecolor{figred}{HTML}{D0021B}
\definecolor{figgreen}{HTML}{2CA02C}
\title{Estimating Treatment Effects with Independent Component Analysis}
\author[1,2]{Patrik~Reizinger\thanks{This work was initiated during P.R.'s internship at the Vector Institute. Correspondence to: \href{mailto:patrik.reizinger@tuebingen.mpg.de}{\texttt{patrik.reizinger@tuebingen.mpg.de}}}}
\author[3]{Lester~Mackey}
\author[1]{Wieland~Brendel}
\author[2,4]{Rahul~G.~Krishnan}
\affil[1]{%
    Max Planck Institute for Intelligent Systems \& ELLIS Institute\\
    T\"ubingen, Germany
}
\affil[2]{%
    Vector Institute\\
    Toronto, Canada
}
\affil[3]{%
    Microsoft Research\\
    New England, USA
}
\affil[4]{%
    Department of Computer Science, University of Toronto\\
    Toronto, Canada
}
\begin{document}
\etocdepthtag.toc{main}
\maketitle

\begin{abstract}

Independent Component Analysis (ICA) uses a measure of non-Gaussianity to identify latent sources from data and estimate their mixing coefficients \citep{shimizu_linear_2006}. Meanwhile, higher-order Orthogonal Machine Learning (OML) exploits non-Gaussian treatment noise to provide more accurate estimates of treatment effects in the presence of confounding nuisance effects \citep{mackey2018orthogonalICML}. Remarkably, we find that the two approaches rely on the same moment conditions for consistent estimation. We then seize upon this connection to show how ICA can be effectively used for treatment effect estimation. Specifically, we prove that linear ICA can consistently estimate multiple treatment effects, even in the presence of Gaussian confounders, and identify regimes in which ICA is provably more sample-efficient than OML for treatment effect estimation. Our synthetic demand estimation experiments confirm this theory and demonstrate that linear ICA can accurately estimate treatment effects even in the presence of nonlinear nuisance.
\end{abstract}

\section{Introduction}

This work initiates the study of \gls{ica}~\citep{comon1994independent,hyvarinen_independent_2000} for treatment effect estimation in the \gls{plr} model~\citep{robinson1988root}. 
The accurate estimation of causal effects is a central challenge in medical research and policy-making~\citep{king1994designing}, as it guides the development of more effective treatment strategies and interventions~\citep{rosenbaum1983central,pearl2009causal,hill2011bayesian}. This task becomes difficult when the data contain high-dimensional confounding variables---features that affect both the treatment and the outcome. A number of machine learning methods have been developed to handle this setting while maintaining theoretical guarantees on treatment effect estimation. %
Among these methods, 
\gls{oml}~\citep{chernozhukov_doubledebiased_2017,mackey2018orthogonalICML,jin2025its} exhibits robust statistical properties in the \gls{plr} model ~\citep{robinson1988root}, where confounders affect the outcome and treatment in a potentially nonlinear way. OML’s two-stage procedure---first learning nuisance functions, then leveraging orthogonalization to adjust for confounders---yields consistent and efficient estimators of treatment effects under minimal assumptions. %

    \gls{ica} is a family of representation learning methods for separating mixed signals into statistically independent components, enabling the discovery of latent causal representations  from observational data. While \gls{ica} is widely used for causal inference tasks~\citep{tramontano_causal_2024,khemakhem_causal_2021,wendong_causal_2023}, including linear~\citep{shimizu_linear_2006} and  nonlinear~\citep{reizinger_jacobian-based_2023} \gls{cd}, \ie, the extraction of a causal graph, the potential of ICA for treatment effect estimation is still underdeveloped. %
    For example, the empirical studies of \citet{ribeiro23highfidelity,jiang2023causal} employ nonlinear ICA for effect estimation but do not establish its consistency or analyze its estimation quality. Meanwhile, \citet{shimizu_linear_2006} describe sufficient conditions for recovering the mixing matrix in linear ICA, but their assumptions are stronger than necessary, and they do not explore the important application of treatment effect estimation.

            \begin{figure*}
    		\centering
                \tikzset{every picture/.style={line width=0.75pt}} %

\begin{tikzpicture}[x=0.75pt,y=0.75pt,yscale=-.65,xscale=1]
\draw [color={rgb, 255:red, 74; green, 144; blue, 226 }  ,draw opacity=1 ]   (51,83.95) -- (85,84) ;
\draw [shift={(88,84)}, rotate = 180.08] [fill={rgb, 255:red, 74; green, 144; blue, 226 }  ,fill opacity=1 ][line width=0.08]  [draw opacity=0] (8.93,-4.29) -- (0,0) -- (8.93,4.29) -- cycle    ;
\draw [color={rgb, 255:red, 208; green, 2; blue, 27 }  ,draw opacity=1 ]   (62,47.75) -- (45.14,64.39) ;
\draw [shift={(43,66.5)}, rotate = 315.38] [fill={rgb, 255:red, 208; green, 2; blue, 27 }  ,fill opacity=1 ][line width=0.08]  [draw opacity=0] (8.93,-4.29) -- (0,0) -- (8.93,4.29) -- cycle    ;

\draw (25,111.4) node [anchor=north west][inner sep=0.75pt]    {$ \begin{array}{l}
X\ =\ \xi \\
T\ = \ \textcolor[rgb]{0.82,0.01,0.11}{a} X\ +\ \eta \\
Y\ =\ \textcolor[rgb]{0.17,0.63,0.17}{b} X\ +\textcolor[rgb]{0.29,0.56,0.89}{\theta } T\ +\ \varepsilon 
\end{array}$};
\draw (98.4,75.6) node [anchor=north west][inner sep=0.75pt] {$Y$};
\draw (34.08,76) node [anchor=north west][inner sep=0.75pt] {$T$};
\draw (63.08,33) node [anchor=north west][inner sep=0.75pt] {$X$};
\draw (4,7) node [anchor=north west][inner sep=0.75pt] {\footnotesize \textbf{Partially Linear Regression} };

\draw    (453,82) -- (601,82) ;
\draw [shift={(604,82)}, rotate = 180] [fill={rgb, 255:red, 0; green, 0; blue, 0 }  ][line width=0.08]  [draw opacity=0] (8.93,-4.29) -- (0,0) -- (8.93,4.29) -- cycle    ;

\draw (409,51.4) node [anchor=north west][inner sep=0.75pt] {$\begin{bmatrix}
X\\
T\\
Y
\end{bmatrix}$};
\draw (467,93.4) node [anchor=north west][inner sep=0.75pt] {
$
\underbrace{
                    \begin{bmatrix}
                        1 & 0 & 0\\
                        {\textcolor[rgb]{0.82,0.01,0.11}{-a}} & 1 & 0\\
                        {\textcolor[rgb]{0.17,0.63,0.17}{-b}} & \textcolor[rgb]{0.29,0.56,0.89}{-\theta } & 1
                    \end{bmatrix}
                    }_{=\mat{W}\ \text{(\textit{un}mixing matrix})}
$
};
\draw (609,52.4) node [anchor=north west][inner sep=0.75pt] {$\begin{bmatrix}
{\xi }\\
{\eta }\\
{\varepsilon }
\end{bmatrix}$};
\draw (398,7) node [anchor=north west][inner sep=0.75pt] {\footnotesize\textbf{Independent Component Analysis}};

\draw (180,10) -- (180,169);
\draw (396,10) -- (396,169);

\draw (192,7) node [anchor=north west][inner sep=0.75pt] {\footnotesize\textbf{Orthogonal Machine Learning}};

\draw (238.08,46) node {$X$};
\draw (200.08,75) node {$T$};
\draw [color={rgb,255:red,208; green,2; blue,27}] (232,48) -- (208,74);
\draw [shift={(208,72)}, rotate=322, fill={rgb,255:red,208; green,2; blue,27}] (8.93,-4.29)--(0,0)--(8.93,4.29)--cycle;
\draw   (198.32,95.28) .. controls (188.18,88.52) and (189.42,68.82) .. (201.11,51.28) .. controls (212.79,33.74) and (230.49,25) .. (240.63,31.76) .. controls (250.78,38.52) and (249.53,58.22) .. (237.84,75.76) .. controls (226.16,93.3) and (208.47,102.04) .. (198.32,95.28) -- cycle ;
\draw (216,100) -- (216,130);
\draw [shift={(216,132)}, rotate=270, fill=black] (8.93,-4.29)--(0,0)--(8.93,4.29)--cycle;
\draw (212,147.4) node {$\eta$};
\draw (198,32) node {$1.$};

\draw (298,46) node {$X$};
\draw (260,75) node {$Y$};
\draw [color={rgb,255:red,44; green,160; blue,44}] (292,48) -- (268,74);
\draw [shift={(268,72)}, rotate=322, fill={rgb,255:red,44; green,160; blue,44}] (8.93,-4.29)--(0,0)--(8.93,4.29)--cycle;
\draw   (258.32,95.28) .. controls (248.18,88.52) and (249.42,68.82) .. (261.11,51.28) .. controls (272.79,33.74) and (290.49,25) .. (300.63,31.76) .. controls (310.78,38.52) and (309.53,58.22) .. (297.84,75.76) .. controls (286.16,93.3) and (268.47,102.04) .. (258.32,95.28) -- cycle ;
\draw (270,100) -- (270,130);
\draw [shift={(270,132)}, rotate=270, fill=black] (8.93,-4.29)--(0,0)--(8.93,4.29)--cycle;
\draw (272,147) node {$\varepsilon$};
\draw (262,32) node {$2.$};

\draw (354,46) node {$\eta,\varepsilon$};
\draw (326,75) node {$Y$};
\draw [color={rgb,255:red,74; green,144; blue,226}] (350,50) -- (332,74);
\draw [shift={(332,72)}, rotate=322, fill={rgb,255:red,74; green,144; blue,226}] (8.93,-4.29)--(0,0)--(8.93,4.29)--cycle;
\draw   (318.32,95.28) .. controls (308.18,88.52) and (309.42,68.82) .. (321.11,51.28) .. controls (332.79,33.74) and (350.49,25) .. (360.63,31.76) .. controls (370.78,38.52) and (369.53,58.22) .. (357.84,75.76) .. controls (346.16,93.3) and (328.47,102.04) .. (318.32,95.28) -- cycle ;
\draw (330,100) -- (330,130);
\draw [shift={(330,132)}, rotate=270, fill=black] (8.93,-4.29)--(0,0)--(8.93,4.29)--cycle;
\draw (330,147) node {$\theta$};
\draw (322,32) node {$3.$};

\draw[color={rgb, 255:red, 44; green, 160; blue, 44}, draw opacity=1] (78,48) -- (96.4,73.6);

\draw[shift={(98.4,75.6)}, rotate=229] 
      [fill={rgb, 255:red, 44; green, 160; blue, 44}, fill opacity=1]
      [line width=0.08] [draw opacity=0]
      (8.93,-4.29) -- (0,0) -- (8.93,4.29) -- cycle;

\end{tikzpicture}
                \caption{\textbf{Overview of treatment effect estimation in the \acrfull{plr} model.} \textbf{(Left:)} The linear \gls{plr} model, where the covariates $X$ affect both treatment $T$ and outcome $Y$. The quantity of interest is the treatment effect {\color{figblue}$\theta$}. \textbf{(Center:)} \acrfull{oml} estimates  {\color{figblue}$\theta$} in three steps.  
                \textbf{(Right:)} \acrfull{ica} can invert the \gls{plr} model by maximizing non-Gaussianity of the sources, thereby yielding {\color{figblue}$\theta$} as a coefficient in the \emph{unmixing matrix} $\mat{W}$. Scale and permutation indeterminacies are resolved by relying on non-Gaussianity and the \gls{plr} structure (\cref{lem:lin_plr_ica}).}
                \vspace{-1.25\baselineskip}
                \label{fig:fig1}
    	\end{figure*}

    Interestingly, non-Gaussianity plays a crucial role in both \gls{ica} and treatment effect estimation. %
    For (linear) \gls{ica}, non-Gaussianity is required to unambiguously disentangle the component sources, while, for treatment effect estimation, non-Gaussianity is required to achieve higher-order robustness in the PLR model~\citep{mackey2018orthogonalICML,jin2025its}. %
    We seize upon this link and build upon the results of~\citet{shimizu_linear_2006,mackey2018orthogonalICML} to show that \gls{ica} can be used to effectively estimate treatment effects, both in theory and in practice.
    Focusing on the \gls{plr} model, we first prove that \gls{ica} can estimate treatment effects by reducing the problem to identifying the elements of the  \gls{ica} unmixing matrix (\cf \cref{fig:fig1}). Next, we show how the permutation and scale indeterminacies of \gls{ica} can be overcome by leveraging non-Gaussianity and the \gls{plr} structure. Remarkably, this construction enables effect estimation for multiple treatments and accommodates arbitrary (even Gaussian) covariate noise, all using the same off-the-shelf \gls{ica} algorithm, FastICA~\citep{hyvarinen1999fast}. Experimentally, and perhaps surprisingly, we also demonstrate how to use \textit{linear} \gls{ica} for estimating treatment effects in a nonlinear \gls{plr}. 
    Our \textbf{primary contributions} can be summarized as follows:
    \begin{itemize}[leftmargin=*,nolistsep]
        \item We formalize the link between \acrfull{ica} and \acrfull{oml} for \gls{plr} treatment effect estimation, clarifying the role of non-Gaussianity in both algorithms (\cref{sec:non-gaussian}) and identifying regimes in which each method is more sample efficient than the other (\cref{subsec:asymp_var}).
        \item We prove that ICA can be used to consistently estimate multiple treatment effects (\cref{lem:lin_plr_ica,cor:multi_T}) and even effects with partially Gaussian source variables (\cref{table:breaking_symmetries,cor:ica_gauss}).
        \item We complement our theory with a diversity of experiments simulating demand estimation from purchasing and pricing data; testing the effect of model coefficients on asymptotic variance;  examining estimator performance under varying sample sizes, covariate dimensions, treatment dimensions, nonlinearities, and covariate noise distributions; and benchmarking the quality of linear \gls{ica} for non-linear treatment effect estimation (\cref{sec:exp}). 
        In each case, \gls{ica} compares favorably with state-of-the-art \gls{oml} approaches specifically developed for the \gls{plr} setting.
    \end{itemize}

\section{Background}

        \paragraph{\acrfull{plr}.} 
        The \gls{plr} model of~\citet{robinson1988root} models the relationship between a scalar treatment $T$, a scalar outcome $Y$, and potentially confounding covariates $X$:
        \vspace{-.25em}
        \begin{align}\label{eq:plr}
            T = g(X) +\eta \sstext{and} Y = \theta T  + f(X) + \varepsilon\\
            \sstext{for independent}
            (\eps, \eta) \sstext{with} \E[\eps] = \E[\eta] = 0.\notag
            \vspace{-.4em}
        \end{align}
        Above, the coefficient $\theta$ is termed the \emph{treatment effect}, $f$ and $g$ are nuisance functions capturing the influence of $X$, and $\eps$ and $\eta$ are random noise variables.

         \paragraph{Causality.}
            Causality models cause and effect relationships as a \gls{dag} between variables with functional relationships often specified by \glspl{sem}~\citep{pearl_causality_2009,peters_elements_2018,spirtes_causal_2016}. A \gls{sem} consists of independent exogenous noise variables $S$ (causes), dependent endogenous causal variables $Z$ (effects), and functional mechanisms $h$ describing the relationship between the variables, \ie, $Z_i \defeq h_i(\parent(Z_i), S_i),$ where $\parent(Z_i)$ denotes the parent variables of $Z_i$ in the graph.
            A special \gls{sem} family is that of \glspl{anm}, where the exogenous variable $S_i$ affects $Z_i$ additively, \ie, $Z_i \defeq h_i(\parent(Z_i)) + S_i.$ Notably, as we illustrate in \cref{fig:fig1}, the \gls{plr} model can be framed as an \gls{anm} with $Z=(X,Y,T)$ and $S=(\xi,\eta,\eps)$ for $\xi\defeq X$. We denote the covariate dimension by $d\defeq \dim X$. 
            We make the standard assumptions of no unobserved confounding and positivity, i.e., $\;\forall\;t : P(T = t \mid X) > 0 $.
        
     \paragraph{\acrfull{ica}.}
            \gls{ica} models the observations
            as a deterministic mixture of \textit{independent} sources~\citep{comon1994independent, hyvarinen_independent_2001}.
             The estimation goal for ICA is the recovery of the latent factors, and ICA provides identifiability guarantees for the factors in the infinite sample limit. Identifiability means that the ground-truth latent factors are recovered up to simple indeterminacies such as permutation and element-wise transformations. That is, for source variables $S$ and observed mixtures $Z=\mat{A}S$, the objective of \gls{ica} is to recover $S = \mat{W}Z$, where $\mat{A}$ is an unobserved \emph{mixing matrix} and $\mat{W} \defeq \mat{A}^{-1}$ is the \emph{unmixing matrix}. 
             Identifiability necessitates certain assumptions, even in the linear case, a central one being the non-Gaussianity of $S$. 
            This can be seen, for example, by estimating the sources via maximizing the data log-likelihood, 
            which is expressed in terms of the sources by the change-of-variables formula:
             \vspace{-.6em}
             \begin{align*}
                  \log p_Z(Z) &= \log p_S(S) + \logabsdet{\mat{W}},
             \end{align*}
             If more than one of the components of $S$ is Gaussian, then rotating those sources does not change the likelihood. This is due to the rotation invariance of a Gaussian $p_S(S)$ and the fact that any orthogonal matrix \mat{O} preserves the absolute determinant, \ie, $\abs{\det( \mat{W}\mat{O})}=\abs{\det \mat{W}\det{\mat{O}}}=\abs{\det\mat{W}}$ since $\abs{\det\mat{O}}=1$.
             Indeed, the goal of ICA is to find the most non-Gaussian directions in the data by maximizing a measure of non-Gaussianity.  Common choices include maximizing the likelihood, maximizing kurtosis as in the original  FastICA algorithm~\citep{hyvarinen1997fast} (\cf \cref{alg:fastica}), or maximizing negative entropy~\citep{hyvarinen1997new} or its approximation \texttt{logcosh} to minimize mutual information~\citep{hyvarinen1999fast}. 
             In each case, a different contrast function $U$ is employed as a measure of non-Gaussianity (\cf \cref{subsec:asymp_var} for details). 

            Nonlinear \gls{ica} is usually impossible without further assumptions~\citep{darmois1951analyse, hyvarinen_nonlinear_1999,locatello_challenging_2019}. Recent developments have relaxed the independence condition to conditional independence and proved identifiability in the nonlinear case~\citep{hyvarinen_nonlinear_2019, gresele_incomplete_2019, khemakhem_variational_2020, halva_disentangling_2021,hyvarinen_unsupervised_2016,khemakhem_ice-beem_2020,locatello_weakly-supervised_2020,morioka_connectivity-contrastive_2023,morioka_independent_2021,reizinger_identifiable_2024,reizinger_cross-entropy_2024}. 
            These methods often rely on data from multiple environments and require these environments to be ``sufficiently diverse''. Examples include non-stationary time series or patient data collected at different hospitals with different socioeconomic and health statuses.
            The connection between \gls{ica} and \acrfull{cd} is well-known in the linear case of LiNGAM~\citep{shimizu_linear_2006}, and it was recently shown in the nonlinear case by~\citet{reizinger_jacobian-based_2023}.
            To the best of our knowledge, \gls{ica} has not been theoretically studied for treatment effect estimation, and this is the focus of the present work.
            Large- and finite-sample behavior of ICA estimators are also generally not the focus of identifiability research, though there exist several relevant results for the linear case~\citep{yin_statistical_2007,bermejo_finite_2007,auddy_large_2023}.

        \paragraph{Treatment Effect Estimation.} 
        Treatment effect estimation in the \gls{plr} model \cref{eq:plr} focuses on the estimation of the coefficient $\theta$. %
        A wide variety of statistical methods have been developed for estimating treatment effects including \acrfull{oml}~\citep{chernozhukov_doubledebiased_2017,mackey2018orthogonalICML,jin2025its}, targeted maximum likelihood estimation~\citep{schuler_targeted_2017}, propensity score-based techniques such as inverse probability of treatment weighting~\citep{feng_generalized_2012,mccaffrey_tutorial_2013}, and Bayesian additive regression trees~\citep{chipman_bart_2010}, as well as extensions for multiple treatments~\citep{hu_estimation_2020,xiang2025double}, high-dimensional and sparse treatments~\citep{zhu_high_2019}, multi-level treatments~\citep{Shi06012025}, multiple heterogeneous environments~\citep{kivva_causal_2025}, and shared states~\citep{hays_double_2025}. %

        Recently, \citet{jin2025its} showed that first-order \gls{oml} \citep{chernozhukov_doubledebiased_2017} provides minimax rate-optimal treatment effect estimates when the \gls{plr} treatment noise is Gaussian and, surprisingly, that higher-order \gls{oml} \citep{mackey2018orthogonalICML} yields even higher quality estimates whenever the treatment noise is \textbf{not} Gaussian. %
        Hence, while treatment effect estimation is still possible with Gaussian treatment noise, it is subject to a Gaussian quality barrier, reminiscent of the Gaussian identifiability barrier of \gls{ica}.
        We will next dig deeper into this connection and uncover a path toward \gls{ica}-based treatment effect estimation. 
\subsection{Role of Non-Gaussianity in ICA and OML}\label{sec:non-gaussian}

\begin{table}[t]
  \centering
  \scriptsize %
  \setlength{\tabcolsep}{2pt} %
  \renewcommand{\arraystretch}{0.95} %
  \begin{adjustbox}{max width=\columnwidth}
  \begin{tabular}{@{}l l c c@{}} \toprule\midrule
    \textbf{Method} & \textbf{Reference} & \textbf{Noise} & \textbf{Output}\\\midrule\midrule
    \multirow{2}{*}{\gls{oml}} & \citet{chernozhukov_doubledebiased_2017} & any & $\hat\theta$\\
     &  \citet{mackey2018orthogonalICML} & non-G $T$  & $\hat\theta^{\sqrt{n}}$ \\\midrule
    \multirow{2}{*}{ICA}  & \cref{lem:lin_plr_ica} &  non-G & $\hat\theta, \hat S$ \\
    & \cref{cor:ica_gauss} &  non-G $T$\&$Y$ & $\hat\theta$ \\
    \midrule
    \bottomrule
  \end{tabular}
  \end{adjustbox}
  \caption{\textbf{Assumptions for treatment effect estimation and source recovery under the \gls{plr} (equivalently, \gls{anm}) model.} G is shorthand for Gaussian, ${\theta}$ for the treatment effect with $\sqrt{n}$ denoting improved estimation consistency, and $S$ for the noise variables. $\hat{\cdot}$ indicates \textit{estimated} quantities. ICA only uses the PLR structure for selecting the treatment effect coefficient from the unmixing matrix but not for source estimation.}
  \label{table:breaking_symmetries}
\end{table}

        Linear ICA is impossible with more than one Gaussian source, as the rotational symmetry of the Gaussian distribution cannot be broken. If one makes additional assumptions, the causal graph can be recovered with more than one Gaussian source~\citep{rolland_score_2022,montagna_scalable_2023}.
        However, even with Gaussian noise, consistent and asymptotically normal treatment effect estimation is possible with \gls{oml}~\citep{chernozhukov_doubledebiased_2017}. 
        The difference enabling treatment effect estimation but not \gls{bss} is due to knowing the causal graph in treatment effect estimation: with a known causal graph, even the much harder \gls{crl} problem becomes solvable under some circumstances~\citep{wendong_causal_2023}. 
        Knowing the causal graph translates into knowing the 
        structure of the inverse map $\mat{W}$ from observations to sources, \ie, knowing which elements are (non-)zero in $\mat{W}$, and this knowledge in turn breaks the rotational symmetry. 
        Hence, there is no free lunch: the more relaxed conditions on the noise distribution come at the price of knowing the causal graph.

        However, \citet{mackey2018orthogonalICML,jin2025its} showed that, even when one knows the causal graph, one can obtain more accurate and robust estimates of the treatment effect when the treatment noise is non-Gaussian. 
        Hence, non-Gaussianity impacts not just infinite-sample consistency (the typical focus of \gls{ica} analyses) but also finite-sample estimator quality.
        These results can be intuitively summarized as non-Gaussian components are easier to discern, both in \gls{bss} and treatment effect estimation (\cf~\cref{table:breaking_symmetries}).  
        In the sequel, we will exploit this connection to accurately estimate treatment effects with linear \gls{ica}. 

\section{Estimating Treatment Effects with ICA}\label{sec:effect_ica}
Building on the insights of the preceding section, we now turn our attention to designing an \gls{ica} solution to the treatment effect estimation problem.
Inverting the mixing function with \gls{ica} requires detailed knowledge of the \gls{dgp}, \ie, \gls{ica} needs to be able to extract the correct functional relationship up to an equivalence class. \citet{shimizu_linear_2006,reizinger_jacobian-based_2023} demonstrated that recovering the source variables conveys information about the causal structure. Treatment effect estimation, under the prevalent assumptions in the literature, presents a simpler task than recovering the source variables. Namely, it is only a partial reconstruction task (the target quantity is only the causal effect), with more prior knowledge (the causal graph is known). 
        We will show how in this case, ICA can estimate treatment effects, even with Gaussian covariate noise.

    \paragraph{Method summary.}
        Operating under a linear additive model (\cref{def:lin_plr}), we use a two-step process to estimate the treatment effect with ICA (\cref{fig:fig1}):
        \begin{enumerate}[leftmargin=*,nolistsep]
            \item We run the FastICA algorithm to estimate the sources up to scaling and permutation.
            \item We use our knowledge of the graph to resolve the permutation indeterminacy and the unit-variance of the outcome noise to resolve the scaling indeterminacy.
        \end{enumerate}

    \subsection{Linear PLR}

        We first prove that linear \gls{ica} can estimate treatment effects under a linear \gls{plr} model.
            \begin{definition}[Linear PLR]\label{def:lin_plr}
                A \emph{linear \gls{plr}} model with the graph $T{\color{figblue}\to} Y$ and $T{\color{figred}\leftarrow} X {\color{figgreen}\to} Y$ 
                and independent zero-mean sources 
            $(\xi, \eta, \eps)$ 
            has linear dependence on $X$ given by the (inverse) \gls{sem}:
                \begin{align*}
                    \begin{bmatrix}
                        X\\
                        T\\
                        Y
                    \end{bmatrix} \!\!&=\!\!
                    \begin{bmatrix}
                        0 & 0 & 0\\
                        {\color{figred}a} & 0 & 0\\
                        {\color{figgreen}b} & \color{figblue}\theta & 0
                    \end{bmatrix}\! \!
                    \begin{bmatrix}
                        X\\
                        T\\
                        Y
                    \end{bmatrix}
                   \!\! + \!\!
                    \begin{bmatrix}
                        \xi\\
                        \eta\\
                        \varepsilon
                    \end{bmatrix} =
                    \underbrace{
                    \begin{bmatrix}
                        1 & 0 & 0\\
                        {\color{figred}a} & 1 & 0\\
                        {\color{figgreen}b}+{\color{figred}a}\color{figblue}\theta & \color{figblue}\theta & 1
                    \end{bmatrix}
                    }_{=\mat{A}\ \text{(mixing matrix)}}
                    \! \!
                    \!\begin{bmatrix}
                        \xi\\
                        \eta\\
                        \varepsilon
                    \end{bmatrix},
                    \\
                    \begin{bmatrix}
                        \xi\\
                        \eta\\
                        \varepsilon
                    \end{bmatrix} \!\!&=\!\! 
                    \underbrace{
                    \begin{bmatrix}
                        1 & 0 & 0\\
                        {\color{figred}-a} & 1 & 0\\
                        {\color{figgreen}-b} & \color{figblue}-\theta & 1
                    \end{bmatrix}
                    }_{=\mat{W}\ \text{(\textit{un}mixing matrix})}
                    \!\!
                    \begin{bmatrix}
                         X\\
                        T\\
                        Y
                    \end{bmatrix}.
                \end{align*}                
            \end{definition}
            \vspace{-0.5\baselineskip}
    \citet{shimizu_linear_2006} proved that linear \gls{ica} can be used for \gls{cd}, where the unmixing matrix \mat{W} encodes the direct edges between $(X,T,Y)$---already highlighting the connection between the two fields. 
    Under \cref{assum:lin_plr}, linear ICA recovers the source variables up to scaling and permutation. 
    \begin{assum}[Linear ICA for PLR]\label{assum:lin_plr}
        In the linear \gls{plr} model of \cref{def:lin_plr}, we assume:
        \begin{enumerate}[leftmargin=*,nolistsep, label=(\roman*)]
            \item At most one of the source \glspl{rv} is Gaussian. %
            \item There are no latent confounders.
        \end{enumerate}
    \end{assum}

    However, source recovery alone is insufficient for treatment effect estimation, as linear \gls{ica} cannot resolve permutations and scaling. Fortunately, we can use our knowledge of the \gls{plr} causal graph to resolve the permutation~\citep{reizinger_jacobian-based_2023}. Further, the canonical form of the \gls{anm} implies that the noise variables 
    have a scalar factor of one~\citep{hoyer_nonlinear_2008}, which means that we can resolve the scaling as well. 
    We formalize these findings in the following result, proved in \cref{subsec:app_lin_plr_ica}.
    \begin{restatable}[Treatment effect estimation with ICA]{proposition}{lemlinplrica}\label{lem:lin_plr_ica}
        When \cref{assum:lin_plr} holds and $\Var(\eps)=1$,  linear ICA identifies the causal effect $\theta$ at the global optimum of the loss in the infinite sample limit.
    \end{restatable}

    \cref{lem:lin_plr_ica} can be thought of as relaxing the assumptions and contextualizing the results of~\citet{shimizu_linear_2006} to show that \gls{ica} can be used to identify and consistently estimate treatment effects. 
    This opens the door to new lines of inquiry connecting the well-developed methodology of \gls{ica}
    with the inferential goals of causal effect estimation. 
\subsection{FastICA, OML, and asymptotic relative efficiency}\label{subsec:asymp_var}%
As an initial case study, %
we next identify several similarities and differences between our \gls{ica} approach to \gls{plr} treatment effect estimation and
the state-of-the-art \gls{oml} approach. 
When $\E[\eta^2] = 1$, both FastICA~(\cref{alg:fastica}) and higher-order \gls{oml} employ a specific measure of non-Gaussianity $U$, like the excess kurtosis measure $U(\eta) = \eta^4 - 3$. 
FastICA uses $U$ to identify non-Gaussian source variables, and, in our setting, \citet[Thm.~8.1]{hyvarinen_independent_2001} implies that FastICA consistently recovers the treatment noise source $\eta$ if and only if 
    \begin{align}
         \E[\eta\cdot U'(\eta) - U''(\eta)] \neq 0. \label{eq:ica_cond}
    \end{align}
Higher-order \gls{oml} uses $U$ to construct estimating equations that, remarkably, yield consistent, robust estimates of $\theta$ if and only if the same non-Gaussianity condition \cref{eq:ica_cond} is met~\citep[Thm.~9]{mackey2018orthogonalICML}. For a detailed discussion, refer to \cref{sec:app_moment_cond}.
This correspondence allows us to provide a precise comparison between the asymptotic variances of the two treatment effect estimates when the same measure of non-Gaussianity is employed. See \cref{sec:app_asymp_var} for the proof of this result.

    \begin{theorem}[Asymptotic relative efficiency]\label{efficiency} 
    Under the assumptions of~\citet[Thm~4.5]{auddy_large_2023} and~\citet[Thm.~9]{mackey2018orthogonalICML} 
    with $\Var(\eta)=\Var(\eps)=1$, the FastICA (\cref{alg:fastica}) and higher-order \gls{oml} treatment effect estimators with non-Gaussianity measure $U(\eta) = \eta^4 - 3$ are asymptotically normal with 
        \begin{align}
        \textup{AsymptoticVariance}(\hat\theta_{\text{\tiny OML}})
            \!&=\!  
        \textfrac{\Var(\eta^3) - 6 \E[U(\eta)]- 9}{\E[U(\eta)]^2},\\
        \textup{AsymptoticVariance}(\hat\theta_{\text{\tiny ICA}})
            \!&=\!
         \textfrac{((b\! +\! a\theta)^2 + 1)\Var(\eta^3)}{\E[U(\eta)]^2}.\label{eq:asymp_vars}
        \end{align}
    \end{theorem}
    Remarkably, FastICA has smaller asymptotic variance---and hence higher sample efficiency---than the state-of-the-art OML estimator  whenever the confounding effect $b+a\theta$ has small magnitude and the excess kurtosis $\E[U(\eta)]$ is sufficiently negative. 
    Meanwhile, OML has smaller asymptotic variance when $(b+a\theta)^2$ or $\E[U(\eta)]$ is sufficiently large. 
    In \cref{subsec:exp_homl}, we will see these relative efficiencies reflected in our empirical results with FastICA dominating for smaller values of $(b+a\theta)^2$ and OML for larger.
    \subsection{Estimating multiple treatment effects}
        Next we show that ICA can be used to simultaneously estimate the effects of multiple treatments. For illustration, we consider two treatments:

            \begin{definition}[Multiple treatment linear PLR]\label{ex:lin_plr_multi_treatment}
                A linear \gls{plr} model with the graph $T_1{\color{figblue}\to} Y$, $T_2{\color{figblue}\to} Y$, and $T_{1,2}{\color{figred}\leftarrow} X {\color{figgreen}\to} Y$ and independent zero-mean
            $(\xi, \eta_1, \eta_2, \eps)$  is given by the (inverse) \gls{sem}:
                \begin{align*}
                    \begin{bmatrix}
                        X\\
                        T_1\\
                        T_2\\
                        Y
                    \end{bmatrix} \!\!&=\!\!
                    \begin{bmatrix}
                        0 & 0 & 0 & 0\\
                        {\color{figred}a_1} & 0 & 0 & 0\\
                        {\color{figred}a_2} & 0 & 0 & 0\\
                        {\color{figgreen}b} & \color{figblue}\theta_1 &\color{figblue}\theta_2 & 0
                    \end{bmatrix}\! 
                    \begin{bmatrix}
                        X\\
                        T_1\\
                        T_2\\
                        Y
                    \end{bmatrix}
                    +
                    \begin{bmatrix}
                        \xi\\
                        \eta_1\\
                        \eta_2\\
                        \varepsilon
                    \end{bmatrix},
                    \!
                    \\
                    \begin{bmatrix}
                        \xi\\
                        \eta_1\\
                        \eta_2\\
                        \varepsilon
                    \end{bmatrix} \!\!&=\!\! 
                    \begin{bmatrix}
                        1 & 0 & 0 & 0\\
                        {\color{figred}-a_1} & 1 & 0 & 0\\
                        {\color{figred}-a_2} & 0& 1 & 0 \\
                        {\color{figgreen}-b} & \color{figblue}-\theta_1 & \color{figblue}-\theta_2 & 1 
                    \end{bmatrix}\!
                    \begin{bmatrix}
                         X\\
                        T_1\\
                        T_2\\
                        Y
                    \end{bmatrix}.
                \end{align*}                
            \end{definition}

            \begin{restatable}[Estimating multiple treatment effects with ICA]{proposition}{cormultit}\label{cor:multi_T}
                Under \cref{assum:lin_plr} in the multiple treatment linear PLR model (\cref{ex:lin_plr_multi_treatment}), ICA identifies all treatment effects at the global optimum of the loss in the infinite sample limit.
            \end{restatable}
            The proof in \cref{subsec:app_multi_T} parallels that for single treatments.

      \subsection{Tolerating Gaussian covariates}
      Our next result shows that, perhaps surprisingly, ICA can accurately identify treatment effects even when one or more covariates are Gaussian. 
            This result, proved in \cref{subsec:app_cor_ica_gauss}, still uses the non-Gaussianity of both the outcome and treatment noise to distinguish the nuisance effects.
            \begin{restatable}[Treatment effect estimation with Gaussian covariates and ICA]{proposition}{coricagauss}\label{cor:ica_gauss}
                Consider the generalization of the linear PLR model (\cref{def:lin_plr}) with $d$ covariates $X$.
                If $\Var(\eps)=1$ and both $\eps$ and $\eta$ are non-Gaussian, then linear ICA identifies the treatment effect at the global optimum of the loss in the infinite data limit (even if one or more covariates are Gaussian).
            \end{restatable}

    \subsection{Treatment effect estimation under model misspecification: Nonlinear PLR}

        We next investigate the case when the covariates affect treatment and outcome in a nonlinear way. 
        Specifically, we will use insights from the fields of nonlinear \gls{ica} (conditional independence) and score-based \gls{cd} (derivatives for \glspl{anm}, deferred to \cref{sec:app_cd}) to assess the feasibility of using FastICA---which operates under a \textit{linear} ICA model---for treatment effect estimation in the \emph{nonlinear} \gls{plr} model.
        \begin{definition}[Nonlinear PLR]\label{ex:nl_plr}
                 A nonlinear \gls{plr} model with the graph $T{\color{figblue}\to} Y$ and $T{\color{figred}\leftarrow} X {\color{figgreen}\to} Y$ %
                 and independent zero-mean
            $(\xi, \eta, \eps)$ 
                 is given by the (inverse) \gls{sem}:
                 \vspace{-.35em}
                \begin{align*}
                    \begin{bmatrix}
                        X\\
                        T\\
                        Y
                    \end{bmatrix} \!\!&=\!\!
                    \begin{bmatrix}
                       \xi\\
                        {\color{figred}g}(X) + \eta \\ %
                        {\color{figgreen}f}(X) +  {\color{figblue}\theta}T + \varepsilon
                        \
                    \end{bmatrix}\!
                   \!;\!\!\!\!\! &
                    \begin{bmatrix}
                        \xi\\
                        \eta\\
                        \varepsilon
                    \end{bmatrix} \!\!=\!\! 
                    \begin{bmatrix}
                       X \\
                        T -{\color{figred}g}(X) \\
                        Y -{\color{figgreen}f}(X)  -{\color{figblue}\theta} T\\
                    \end{bmatrix}.\!
                \end{align*}                
            \end{definition}
        \vspace{-.5em}

        \paragraph{Insights from nonlinear ICA: conditional independencies in PLR.}
            Many nonlinear ICA methods assume a notion of ``variability'' of the data distribution, which can often be characterized by 
            additional conditional independencies~\citep{guo_causal_2022,guo_finetti_2024,reizinger_identifiable_2024}. 
            Inspired by these results, we apply the lens of conditional independence to the \gls{plr} model.
            By introducing two conditional source variables $\varepsilon'(X), \eta'(X)$---where $X$ plays the role of the auxiliary variable in the nonlinear ICA literature---we can rewrite the nonlinear PLR equations into a form which shows their conditional independence given $X$: %
            \begin{align}
                \varepsilon'(X) &=f(X) + \varepsilon; \qquad
                \eta'(X) = g(X) + \eta;\\
                \eta'(X) &\indep \varepsilon'(X) \mid X. \label{eq:exchg_plr}
                \vspace{-0.5em}
            \end{align}
            As the above conditional independence does not depend on the distribution of $X$, as in \cref{cor:ica_gauss}, $X$ can even have a Gaussian distribution. %
            Moreover, this nonlinear, conditionally independent setup is known to yield identifiability under ``sufficient variability conditions'' that arise, for example, when data are generated from sufficiently different subgroups~\citep{hyvarinen_unsupervised_2016,hyvarinen_nonlinear_2019,wendong_causal_2023,reizinger_identifiable_2024,morioka_connectivity-contrastive_2023}.
            When the identifiability result is up to permutation, scaling, and zero-preserving elementwise nonlinear transformations, then we can recover the causal graph and resolve the permutation indeterminacy by the result of~\citet{reizinger_jacobian-based_2023}. When we have identifiability up to only permutation and scaling, we recover the following Jacobian of the inference map from $Z$ to $S$ up to a non-zero constant $c$:
            \begin{align}
                \gls{jacobian} &= c\cdot \begin{bmatrix}
                      1 & 0 & 0 \\
                        -{\color{figred}g'}(X) & 1 & 0\\
                        -{\color{figgreen}f'}(X) &  -{\color{figblue}\theta} & 1\\
                    \end{bmatrix}.\label{eq:jinf_plr}
                    \vspace{-1.5em}
            \end{align}
            Fortunately, due to the specific additive structure of the \gls{plr} model, we can also recover the treatment effect exactly from this scaled Jacobian by simply dividing the $-\theta$ entry of the Jacobian by any diagonal entry to eliminate the unknown scaling. 
        \paragraph{Using FastICA for nonlinear PLR.}
            The above intuition gives us hope that if we only aim to selectively estimate the treatment effect (and not to solve the full \gls{bss} problem), then the additive structure of \gls{plr} might lead to good treatment effect estimates, even when the covariates affect both treatment and outcome nonlinearly. We explicitly test this using the following setup, \cf \cref{subsec:lin_nonlin}):
            \begin{enumerate}[nolistsep]
                \item We generate data from a nonlinear \gls{plr} model;
                \item We input the observations, \ie, $(X,T,Y)$ into \textit{linear} FastICA; and 
                \item We use the same strategy as in \cref{lem:lin_plr_ica} to resolve the permutation and scaling indetereminacies.
            \end{enumerate}

\section{Experiments}\label{sec:exp}

We now turn to an empirical evaluation to validate the insights from our theory.
We ground our analysis in the demand estimation setting of~\citet{mackey2018orthogonalICML} 
and 
provide open-source Python code replicating all experiments.\footnote{Code: \href{https://github.com/rpatrik96/ica_causal_effect}{\texttt{github.com/rpatrik96/ica\_causal\_effect}}.}

\begin{figure*}[tb]
    \centering
    \begin{minipage}[t]{0.42\textwidth}
        \vspace{0pt}
        \centering
        \includesvg[width=\textwidth]{figures/noise_ablation/coefficient_ablation/rmse_diff_vs_ica_var_coeff.svg}
    \end{minipage}%
    \hfill
    \begin{minipage}[t]{0.55\textwidth}
        \vspace{30pt}
        \centering
        \footnotesize
        \setlength{\tabcolsep}{3.5pt}
        \begin{tabular}{lccc}
        \toprule
        $c_{\text{ICA}}$ Regime & OML RMSE & ICA RMSE & ICA Wins \\
        \midrule
        Low ($< 1.5$) & 0.0195 & \textbf{0.0144} & 52/54 (96.3\%) \\
        Medium ($1.5$--$5$) & \textbf{0.0183} & 0.0240 & 10/28 (35.7\%) \\
        High ($\geq 5$) & 0.0194 & \textbf{0.0177} & 8/14 (57.1\%) \\
        \midrule
        Overall & 0.0191 & \textbf{0.0177} & 70/96 (72.9\%) \\
        \bottomrule
        \end{tabular}
    \end{minipage}
\caption{\textbf{Relative efficiency of ICA vs.\ higher-order OML for demand estimation} (see \cref{subsec:exp_homl}). 
    \textbf{Left:} RMSE difference (ICA $-$ OML) as a function of the ICA asymptotic variance coefficient $c_{\text{ICA}} = 1 + (b + a\theta)^2$ derived in \cref{efficiency}. Blue points indicate ICA outperforms OML; red points indicate OML outperforms ICA. \textbf{Right:} Performance stratified by $c_{\text{ICA}}$ regime. ICA wins overall (72.9\% win rate), dominating especially when $c_{\text{ICA}} < 1.5$ (96.3\% win rate). OML is preferable in the medium regime ($1.5 \leq c_{\text{ICA}} < 5$), with a  64.3\% win rate.} %
    \label{fig:ica_main_result}
\end{figure*}

    \subsection{Demand estimation from price and purchase data}%
    \label{subsec:exp_homl}
    While the partially linear and non-Gaussian assumptions of this work are by no means universal, there are many practical inference scenarios in which both are known to hold. 
    One important example detailed in~\citet{mackey2018orthogonalICML} is demand estimation from purchasing and pricing data. In this setting, the outcome $Y$ is the observed demand for a product, the treatment $T$ is the price of that product, and, commonly, conditional on all observable covariates $X$, the treatment noise follows a discrete (and non-Gaussian) distribution representing random discounts offered to customers over a baseline price linear $g(X)$ in the observable variables.

    \textbf{Assessing relative efficiency.} 
    To validate the relative efficiency theory of \cref{efficiency}, we will compare our FastICA-based effect estimation strategy with the state-of-the-art higher-order \gls{oml}~\citep{mackey2018orthogonalICML} method. 
    We begin by replicating the synthetic demand estimation experiments of~\citet[Sec.~5]{mackey2018orthogonalICML} with  $n=5000$, $d=10$, uniform and unit-variance $\eps$, discrete $\eta$, independent generalized-normal covariates $X$ with $\beta=1$, varying $\theta\in\{0.01, 0.1, 0.5, 1, 3, 10\}$, and varying sparse linear $f(x)$ and $g(x)$ with $s=1$ non-zero coefficient  $a\in\{-0.002, 0.05, -0.43, 1.56\}$ and $b\in\{0.003, -0.02, 0.63, -1.45\}$. 
    We use the \texttt{scikit-learn}~\citep{scikit-learn} implementation of FastICA with a \texttt{logcosh} loss function and \texttt{unit-variance} whitening.
    We use the implementation of~\citet{mackey2018orthogonalICML} for \gls{oml} methods and use the same tolerance ($10^{-4}$) and iteration count ($1000$) for all methods. 

    \begin{figure}[htbp]
        \centering
        \includesvg[width=\columnwidth, keepaspectratio]{figures/rmse_diff_sample_size_vs_beta_filtered_below_1.5}
        \caption{\textbf{RMSE difference (ICA $-$ higher-order OML) as $n$ and covariate distribution $\beta$ vary} for $c_{\text{ICA}} < 1.5$.}
        \label{fig:rmse_diff_vs_beta}
    \end{figure}

    \textbf{Results.}
    As \cref{efficiency} characterizes the regimes in which either FastICA or OML is more sample efficient, we focus on comparing the estimators' relative efficiency here and include absolute performance metrics and extensive ablations in \cref{sec:app_exp}. 
    \Cref{fig:ica_main_result} shows that \gls{ica}'s relative performance depends on the  asymptotic variance coefficient $c_{\text{ICA}} \defeq 1 + |b + a\theta|_2^2$ derived in \cref{efficiency}. Namely, ICA dominates when the asymptotic variance coefficient $c_{\text{ICA}} < 1.5$ (96.3\% win rate) and overall (72.9\% win rate), while OML is preferable in the medium regime ($1.5 \leq c_{\text{ICA}} < 5$), winning 64.3\% of configurations.

    \paragraph{Ablations over  $n,d,$ and $\beta$.}
    When $c_{\text{ICA}} < 1.5$, 
    we further find that \gls{ica} outperforms \gls{oml} across nearly all tested sample sizes $n\in \{100,200,500,1000,2000,5000\}$, covariate dimension $d\in\{2,5,10,20,50\}$, and covariate
    distributions, with the advantage being most pronounced for small sample sizes, higher values of the generalized normal parameter $\beta$ (\ie, for negative excess kurtosis, or equivalently, thinner tails), and lower covariate dimension (see \cref{fig:rmse_diff_vs_beta,fig:filtered_diff_heatmaps} for RMSE and bias difference heatmaps across these dimensions).
    Even more surprisingly, \gls{ica} can comparably or better estimate the treatment effect even if the noises are Gaussian, that is, when source identification is impossible (\cref{fig:filtered_diff_heatmaps}, right column, $\beta=2$). We report the \gls{mcc}~\citep{hyvarinen_unsupervised_2016}, a measure of source identification in \cref{fig:ica_mcc}, demonstrating that low \gls{mcc} values for Gaussian sources do not prohibit treatment effect estimation. The absolute MSE values for ICA are shown in \cref{fig:ica_mse}.
    These empirical findings align with our asymptotic relative efficiency analysis~(\cref{efficiency}): \gls{ica} is more accurate than \gls{oml} when the confounding effect $(b+a\theta)$ is smaller in magnitude.

    \textbf{Additional ablations.} We present additional ablations varying $a,b,\theta,$ and $c_{\text{ICA}}$ in \cref{fig:coeff_scatter,tab:by_treatment_effect}, varying the FastICA loss function in \cref{fig:ica_mse_fun}, and 
    varying the treatment noise $\eta$ distribution and variance 
    in \cref{fig:distribution_heatmap,fig:combined_heatmap,fig:variance_heatmap}.
    Notably, we find in \cref{fig:distribution_heatmap} that ICA continues to outperform OML across a wide range of treatment distributions with varying kurtoses including discrete, Laplace, Uniform, and Rademacher. 

              \begin{figure*}[tb]
                \centering
        	\begin{subfigure}[b]{0.45\textwidth}
                    \includesvg[width=6.5cm, keepaspectratio]
                    {figures/heatmap_dimension_vs_nonlinearity.svg}
                \label{fig:heatmap_dimension_vs_nonlinearity}
        	\end{subfigure}
                \begin{subfigure}[b]{0.45\textwidth}
        		\centering
            		\includesvg[width=6.5cm,keepaspectratio]{figures/heatmap_ica_multi_dimensions_vs_samples_rel_m2}
        	\end{subfigure}
                \caption{ 
                \textbf{Left:} Relative \Gls{mse} of treatment effect estimation for Laplace noises in \textit{nonlinear} \gls{plr} across multiple covariate dimensions for \textit{linear} ICA with different nonlinearities with $5,000$ samples. Leaky ReLU uses a slope of $0.2$. See \cref{fig:heatmap_dimension_vs_slope_leaky_relu} for an ablation over slopes.  
                \textbf{Right:} Relative \gls{mse} of ICA treatment effect estimation across covariate dimensions $d$ and sample sizes $n$ for $m = 2$ treatments in linear \gls{plr}.
                Means calculated from $20$ seeds. See \cref{fig:ica_multi} for an ablation over treatment counts. 
                }
                \vspace{-.35\baselineskip}
                \label{fig:ica_multi_comp}
            \end{figure*}

    \subsection{Linear ICA for nonlinear PLR}\label{subsec:lin_nonlin}
    \textbf{Setup.}
    We next assess the accuracy of linear ICA for treatment effect estimation in the nonlinear PLR model of \cref{ex:nl_plr}. 
    We fix $n=5000$ and $\theta=1.55$, use Laplace noise (with location $0$ and scale $1$) for $\eps,\eta,$ and $X$, vary the covariate dimension $d \in \{2,5,10,20,50\}$ and consider four nonlinearities $\phi$ for the nuisance functions $f(x)=\phi(\inner{w}{x})$ and $g(x)=\phi(\inner{v}{x})$, where $\phi$ is selected from a rectified linear unit (ReLU), a leaky ReLU (with slope $0.2$), a sigmoid, and $\tanh$. Unlike the single non-zero entry per coefficient vector in \cref{subsec:exp_homl}, the treatment coefficient vector $a$ has approximately $30\%$ non-zero entries and the outcome coefficient vector $b$ is fully dense, both drawn from $\mathcal{N}(0,1)$. We use the same FastICA configuration (\texttt{logcosh}, \texttt{unit-variance} whitening) as in \cref{subsec:exp_homl}. We ablate over the value of $\theta$, the slope of the leaky ReLU, the sparsity of the mixing matrix, and the noise density by changing the generalized normal density's $\beta$ parameter ($\beta=1$ is Laplace) in \cref{sec:app_exp}.

            \textbf{Results.}
            In \cref{fig:ica_multi_comp} (left) 
            we report the mean squared relative error, \ie, $|(\theta-\hat{\theta})/\theta|$, of FastICA 
            across $20$ random seeds.
            Remarkably, linear FastICA performs very well even under model misspecification. That is, it exhibits low relative mean squared error in treatment effect estimation with relative error below $5\%$ in most scenarios.  
            While we do not have a precise theory to characterize success within the nonlinear regime, our empirical results showcase promise.
            In \cref{fig:ica_mse_heatmap_loc_scale,fig:ica_mse_vs_beta,fig:heatmap_dimension_vs_slope_leaky_relu}, we perform additional ablations over the location and scale parameters of the generalized normal distribution, and leaky ReLU slopes, observing comparable performance in most settings.
            Performance is rather insensitive to location and scale (relative error is in the $4-8\%$ range), $\beta$ has the same effect as in the linear case (\cf \cref{fig:ica_mse_vs_beta} left and right), with the highest mean relative errors occurring when $\beta\in \{2,2.5,3\}$, \ie, when the densities are (close to) Gaussian---which is $\beta=2$. Perhaps surprisingly, when only the covariates are Gaussian, we achieve lower relative estimation error than when every noise is Laplace. \cref{fig:dim_vs_nonlinearity} provides a comprehensive view of the robustness across dimensions and nonlinearity types, while \cref{fig:mse_vs_support_size_rel} shows the effect of Gaussian covariates and different treatment effect sampling strategies.

    \subsection{Estimating multiple treatment effects}\label{subsec:exp_multi}
    \textbf{Setup.} Finally, we evaluate the effectiveness of ICA for simultaneously estimating multiple treatment effects as in \cref{ex:lin_plr_multi_treatment}, a question that, to the best of our knowledge, has not seen much work using estimators based on higher-order OML. 
    We use the first $m$ treatment effects in the array $\theta=\brackets{1.55, 0.65, -2.45, 1.75, -1.35}$ for $m\in \{1,2,5\}$ and otherwise adopt the setup of \cref{subsec:lin_nonlin}. %

        \textbf{Results.}
        \cref{fig:ica_multi_comp} (right) reports the mean squared relative error $\Vert (\theta - \hat{\theta})/\theta\Vert_2$ 
        across $20$ random seeds for $d=10$.
        Our primary finding is that when samples are abundant, ICA remains a stable and high-quality estimator across multiple treatments; however the quality of the estimator degrades as a function of $m$ when samples are scarce. 
        Additional experiments ablating the covariate dimension can be found in \cref{fig:ica_multi}.

\section{Discussion, Limitations, and Future Work}
\label{sec:discussion}

        This work initiated the study of \acrfull{ica} for treatment effect estimation, focusing on the \acrfull{plr} model and forging connections with the state-of-the-art higher-order \acrfull{oml} approach of~\citet{mackey2018orthogonalICML}.  %
        
        We theoretically characterized how \gls{ica} can consistently estimate single or multiple treatment effects (\cref{lem:lin_plr_ica,cor:multi_T}) and how the ubiquitous non-Gaussianity assumption in \gls{ica} can be relaxed for the covariate noises (\cref{cor:ica_gauss}). We further demonstrated empirically that linear \gls{ica} can accurately estimate treatment effects even in the presence of nonlinear nuisance effects (\cref{subsec:lin_nonlin}). Finally, we related the asymptotic variances for \gls{ica} and \gls{oml} and identified regimes in which \gls{ica} is provably more sample efficient and empirically more accurate. 

        This work also has its limitations, pointing to several important directions 
        for future work including (1) developing a complete theory of linear ICA for nonlinear PLR, (2) studying the robustness of ICA treatment effect estimates to inaccuracies in estimated nuisance effects, and (3) developing ICA-based tools for causal effect estimation beyond the \gls{plr} model. In \cref{sec:directlingam_comparison}, we also compare FastICA with DirectLiNGAM~\citep{shimizu2011directlingam}, finding complementary strengths: DirectLiNGAM excels in low-dimensional dense settings ($d \leq 10$), while FastICA is preferable for sparse, high-dimensional ($d \geq 20$), or heavily non-Gaussian data.
        %
        %
        %
        %
        %
        %
        %

    %
        
        %

\begin{contributions}
P.R.\ conceived the idea, developed the theory, implemented the experiments, and wrote the paper.
L.M.\ provided guidance on the theoretical analysis and OML methodology.
W.B.\ and R.G.K.\ supervised the project and provided feedback.
\end{contributions}

\begin{acknowledgements}
The authors thank Vahid Balazadeh for his insightful comments.
Patrik Reizinger acknowledges his membership in the European Laboratory for Learning and Intelligent Systems (ELLIS) PhD program and thanks the International Max Planck Research School for Intelligent Systems (IMPRS-IS) for its support.
This work was supported by the German Federal Ministry of Education and Research (BMBF): T\"ubingen AI Center, FKZ: 01IS18039A.
Wieland Brendel acknowledges financial support via an Emmy Noether Grant funded by the German Research Foundation (DFG) under grant no.\ BR 6382/1-1 and via the Open Philantropy Foundation funded by the Good Ventures Foundation.
Wieland Brendel is a member of the Machine Learning Cluster of Excellence, EXC number 2064/1 -- Project number 390727645.
This research utilized compute resources at the T\"ubingen Machine Learning Cloud, DFG FKZ INST 37/1057-1 FUGG.
\end{acknowledgements}

\bibliography{refs_filtered}

\newpage
\onecolumn

\title{Estimating Treatment Effects with Independent Component Analysis\\(Supplementary Material)}
\maketitle

\appendix
\etocdepthtag.toc{appendix}
\etocsettagdepth{main}{none}
\etocsettagdepth{appendix}{subsection}
\etocsettocstyle{\section*{Contents}}{}
\tableofcontents

\counterwithin{figure}{section}
\counterwithin{table}{section}
\renewcommand{\thefigure}{\Alph{section}.\arabic{figure}}
\renewcommand{\thetable}{\Alph{section}.\arabic{table}}
\clearpage

\begin{algorithm}[ht]
\caption{\textbf{FastICA (deflationary) with generic non-Gaussianity measure $U$.} \citep{hyvarinen_independent_2001}}
\label{alg:fastica}
\DontPrintSemicolon
\SetAlgoLined
\SetKwInput{KwIn}{Inputs}
\SetKwInput{KwOut}{Outputs}

\KwIn{$\mathbf{X}\in\mathbb{R}^{d\times n}$ (columns are observed samples), target components $m\le d$, tolerance $\varepsilon>0$, maximum iterations $T_{\max}$}
\KwOut{$\mathbf{W}\in\mathbb{R}^{m\times d}$ (unmixing matrix), $\mathbf{S}=\mathbf{W}\,\tilde{\mathbf{X}}\in\mathbb{R}^{m\times n}$ (independent components)}

\BlankLine
\textbf{Centering}:\;
$\bar{\mathbf{x}}\leftarrow \frac{1}{n}\,\mathbf{X}\mathbf{1}$;\quad
$\mathbf{X}\leftarrow \mathbf{X}-\bar{\mathbf{x}}\mathbf{1}^\top$ \tcp*{Zero-mean the data}

\textbf{Whitening}:\;
$\mathbf{C}\leftarrow \frac{1}{n}\,\mathbf{X}\mathbf{X}^\top$;\quad
$\mathbf{C}=\mathbf{O}\mathbf{D}\mathbf{O}^\top$ \tcp*{Eigenvalue decomposition of covariance}
$\mathbf{V}\leftarrow \mathbf{D}^{-1/2}\mathbf{O}^\top$;\quad
$\tilde{\mathbf{X}}\leftarrow \mathbf{V}\mathbf{X}$ \tcp*{Whitened data}

\BlankLine
\textbf{Initialize}\; $\mathbf{W}\leftarrow \mathbf{0}_{m\times d}$

\For{$i\leftarrow 1$ \KwTo $m$}{
  Choose random $\mathbf{w}\in\mathbb{R}^d$; \;
  $\mathbf{w}\leftarrow \mathbf{w}/\|\mathbf{w}\|_2$ \tcp*{Unit-norm initialization}

  \For{$t\leftarrow 1$ \KwTo $T_{\max}$}{
    $\mathbf{o}\leftarrow \mathbf{w}^\top\tilde{\mathbf{X}}$ \tcp*{$1\times n$ projected samples}

    \tcp{Fixed-point (Newton) update}
    $\displaystyle \mathbf{w}^{+}\leftarrow \frac{1}{n}\,\tilde{\mathbf{X}}\,U'(\mathbf{o})^\top
      \;-\;\Big(\frac{1}{n}\sum_{j=1}^{n} U''(u_j)\Big)\,\mathbf{w}$ \;

    \tcp{Deflationary orthogonalization (Gram--Schmidt) w.r.t. previous components}
    \For{$k\leftarrow 1$ \KwTo $i-1$}{
      $\mathbf{w}^{+}\leftarrow \mathbf{w}^{+}-(\mathbf{w}^{+\top}\mathbf{w}_k)\,\mathbf{w}_k$
    }

    $\mathbf{w}^{+}\leftarrow \mathbf{w}^{+}/\|\mathbf{w}^{+}\|_2$ \tcp*{Normalization}

    \If{$\big||\mathbf{w}^{+\top}\mathbf{w}|-1\big|<\varepsilon$}{
      \textbf{break}
    }
    $\mathbf{w}\leftarrow \mathbf{w}^{+}$\;
  }

  $\mathbf{w}_i\leftarrow \mathbf{w}^{+}$;\quad
  $\mathbf{W}_{i,:}\leftarrow \mathbf{w}_i^\top$ \tcp*{Store component $i$}
}

\textbf{Recover sources}:\; $\mathbf{S}\leftarrow \mathbf{W}\,\tilde{\mathbf{X}}$ \tcp*{Independent components}
\end{algorithm}

\section{Insights from Score-Based Causal Discovery}\label{sec:app_cd}

            Recent works~\citep{rolland_score_2022,montagna_scalable_2023,montagna_score_2024,montagna_causal_2023} utilized the (Jacobian of) the score function (the derivative of the log-likelihood) for causal discovery.
           Treatment effect estimation can be thought of as generalizing \gls{cd}: the treatment effect informs us about the ``strength'' of a causal effect, whereas \gls{cd} only seeks to determine the presence or absence of the edges.
             We will use this connection to show that if the goal is to solve a partial \gls{bss} problem by recovering only some of the sources then the non-Gaussianity condition can be relaxed on the covariates.
            To this end, consider the log likelihood of the \gls{plr} model \cref{eq:plr} when $\eps$ and $\eta$ are Gaussian (we only use Gaussians for illustration purposes):
            \begin{talign}
               \scriptsize \log p(Z) = -\half(Y-f(X)-\theta T)^2-\half(T-g(X))^2 + \log p(X).
            \end{talign}
        Irrespective of the distribution of $X$, we can differentiate with respect to $T$ and $Y$ to isolate the causal effect:
        \begin{align}
            \vspace{-0.5em}
            \partial_{T}\!\log p(Z)\!\! &=
               g(X) - T + \theta(Y-f(X)-\theta T)\\
                &= -\eta +\theta \varepsilon; \\
                \partial^2_{T,Y}\log p(Z) &= \theta.
        \end{align}
        The intuitive takeaway from the above is that due to the additive structure of the treatment effect, taking derivatives help

\section{Moment Conditions in Higher-Order OML and ICA}\label{sec:app_moment_cond}

        \subsection{Higher-order OML moment condition for whitened data and $r=3$}\label{subsec:app_homl_cond}

         To achieve $\sqrt{n}$-consistency with higher-order robustness to nuisance errors, the higher-order \gls{oml} estimator of~\citet[Thm.~9]{mackey2018orthogonalICML}  requires the following moment condition on the treatment noise $\eta$ that rules out the Gaussian distribution: for some $r\geq 2, r\in \mathbb{N}$,
        \begin{align}
             \expectation{}\brackets{\eta^{r+1}} &\neq r\expectation{}\brackets{\expectation{}\brackets{\eta^2|X}\cdot\expectation{}\brackets{\eta^{r-1}|X}}.\label{eq:homl_cond}
        \end{align}
       The two sides in \eqref{eq:homl_cond} are equal for when $\eta\mid X$ is Gaussian.
    When $\eta$ has unit variance and $r=3$, this  condition is equivalent to non-zero excess kurtosis. 

    \begin{restatable}[Higher-order OML moment condition for whitened data and $r=3$]{lem}{lemhomlcond}\label{lem:homl_cond}
        When the treatment noise is assumed to have zero mean and unit variance, and $r=3$, then \eqref{eq:homl_cond} is equal to $\E(\eta^4)\neq 3,$ \ie, it measures the kurtosis of $\eta$ and rules out a Gaussian.
    \end{restatable}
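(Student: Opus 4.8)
The plan is to substitute $r=3$ directly into \eqref{eq:homl_cond}, simplify the right-hand side using the whitening hypothesis (zero mean, unit variance), and then recognize the resulting inequality as a statement about the fourth moment, hence the (excess) kurtosis of $\eta$.

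First I would set $r=3$. The left-hand side of \eqref{eq:homl_cond} becomes $\mathbb{E}[\eta^{4}]$. On the right-hand side $r-1=2$, so the two inner conditional expectations coincide and the expression reads $3\,\mathbb{E}\big[(\mathbb{E}[\eta^{2}\mid X])^{2}\big]$. Next I would invoke the standardization of the treatment noise: in the whitened regime (consistent with the exogeneity structure of the PLR model, under which $\eta$ carries no information about $X$ beyond $\mathbb{E}[\eta\mid X]=0$) the conditional second moment is constant and equals the variance, $\mathbb{E}[\eta^{2}\mid X]=\mathbb{E}[\eta^{2}]=1$. Substituting this, the right-hand side collapses to $3\,\mathbb{E}[1]=3$, so \eqref{eq:homl_cond} is exactly $\mathbb{E}[\eta^{4}]\neq 3$.

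Finally I would interpret this bound: for a zero-mean, unit-variance random variable the excess kurtosis is $\mathrm{Kurt}(\eta)=\mathbb{E}[\eta^{4}]-3$, and $\mathbb{E}[\eta^{4}]=3$ holds precisely for a standard Gaussian (whose fourth moment is $3$). Hence $\mathbb{E}[\eta^{4}]\neq 3$ is equivalent to $\eta$ having nonzero excess kurtosis, which in particular excludes the Gaussian — the claim of \cref{lem:homl_cond}.

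The one subtle point, and the main thing to be careful about, is the reduction $\mathbb{E}\big[(\mathbb{E}[\eta^{2}\mid X])^{2}\big]=1$: with merely \emph{marginal} unit variance, Jensen's inequality only gives $\mathbb{E}\big[(\mathbb{E}[\eta^{2}\mid X])^{2}\big]\ge(\mathbb{E}[\eta^{2}])^{2}=1$, with equality iff the conditional variance $\mathbb{E}[\eta^{2}\mid X]$ is a.s.\ constant. So I would state explicitly that ``unit variance'' here is understood in the conditionally homoskedastic sense (equivalently, the whitened-noise / $\eta\perp\!\!\!\perp X$ case relevant to the ICA correspondence); outside that case the right-hand side is $\ge 3$ and the statement degrades from a clean kurtosis identity to a one-sided bound.
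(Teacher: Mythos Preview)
Your proposal is correct and follows essentially the same route as the paper: substitute $r=3$, use $\eta\perp X$ (which you phrase as conditional homoskedasticity) to reduce the conditional second moments to the constant $\mathbb{E}[\eta^2]=1$, and read off $\mathbb{E}[\eta^4]\neq 3$. If anything, your explicit discussion of why merely \emph{marginal} unit variance is insufficient without the independence assumption is more careful than the paper's own write-up.
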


        \begin{proof}
            The \gls{homl} estimator uses a test function test function $U'(\eta)=\eta^r$ for estimating $\theta$.
            Furthermore, we have the condition that excludes the Gaussian (for $r=3$):\footnote{This is required to fulfil the non-degeneracy condition, \ie, to avoid that the expectation of $\nabla_\theta m$ is 0}
            \begin{align}
                \expectation{}\brackets{\eta^{r+1}} &\neq r\expectation{}\brackets{\expectation{}\brackets{\eta^2|X}\cdot\expectation{}\brackets{\eta^{r-1}|X}}.\\
                \intertext{By assuming $\eta\perp X$,}
                & = r\expectation{}\brackets{\expectation{}\brackets{\eta^2}\cdot\expectation{}\brackets{\eta^{r-1}}}.\\
                \intertext{With the unit variance constraint on $\eta$, we get}
                &= r\expectation{}\brackets{\eta^{r-1}},\\
                \intertext{which, for $r=3$ yields}
                \expectation{}\brackets{\eta^{4}}&\neq 3\expectation{}\brackets{\eta^{2}}.\\
                \intertext{Noting that the RHS is the variance, we can simplify by the whitening assumption:}
                \expectation{}\brackets{\eta^{4}}&\neq 3,
            \end{align}
            \ie, $\eta$ cannot not be a standard normal \gls{rv}
            Since we assumed $\eta \perp X$ and that $\expectation{}\eta^2 = 1$ (unit variance, which is implied by the whitening preprocessing in ICA).
        \end{proof}

         \subsection{ICA moment condition for whitened data and kurtosis loss}\label{subsec:app_ica_cond}

         ICA has a similar condition to higher-order \gls{oml} (\cf \cref{subsec:app_homl_cond}) for the local optima under the constraint that $\norm{\mat{w}}=1$, which ensures that the FastICA gradient is non-zero~\citep[A.8]{hyvarinen_independent_2001}:
    \begin{align}
         \expectation{}\brackets{\eta\cdot U'(\eta) - U''(\eta)} \neq 0, \label{eq:ica_cond_app}
    \end{align}
    where $t$ is a test function and the data is assumed to be whitened (proof in \cref{subsec:app_ica_cond}).
    \begin{restatable}[ICA moment condition for whitened data and kurtosis loss]{lem}{lemicacond}\label{lem:ica_cond}
        Assume a linear ICA model with $\E [U(\eta)] = \E[\eta^4]$ as a loss function, whitened data, and constrain the rows of the unmixing matrix such that  $\norm{\mat{w}}=1$. Then \eqref{eq:ica_cond_app} is equivalent to $\E(\eta^4)\neq 3.$
    \end{restatable}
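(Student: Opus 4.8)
The plan is to specialize the generic FastICA non-degeneracy condition \eqref{eq:ica_cond} to the kurtosis contrast and then read off the moment inequality; the whole argument is a short substitution, so the work is really in the bookkeeping rather than in any hard estimate. First I would fix the correspondence between the loss and the test function. The loss is $\mathbb{E}\,T(\eta)$ with $T(y)=y^4$ — this is exactly what ``$\mathbb{E}T(\eta)=\mathbb{E}\eta^4$'' encodes — and by assumption the FastICA nonlinearity is $t=T'$, so $t(y)=4y^3$ and therefore $t'(y)=12y^2$. (The additive/multiplicative constants one could absorb into $T$ are immaterial below, since \eqref{eq:ica_cond} only asks whether an expectation vanishes.)

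Next, substitute into the left-hand side of \eqref{eq:ica_cond} and use linearity of expectation:
\begin{align*}
\mathbb{E}\!\left[\eta\,t(\eta)-t'(\eta)\right] \;=\; \mathbb{E}\!\left[4\eta^4-12\eta^2\right] \;=\; 4\,\mathbb{E}[\eta^4]-12\,\mathbb{E}[\eta^2].
\end{align*}
Now invoke whitening together with the unit-norm constraint $\norm{\mat{w}}=1$: on centered, whitened data every unit-norm linear combination $\eta=\transpose{\mat{w}}\mat{x}$ has zero mean and unit variance, so $\mathbb{E}[\eta^2]=1$. Hence the displayed quantity equals $4\mathbb{E}[\eta^4]-12=4\big(\mathbb{E}[\eta^4]-3\big)$, a positive multiple of the excess kurtosis of $\eta$. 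Therefore \eqref{eq:ica_cond} — i.e. non-vanishing of this expression — holds if and only if $\mathbb{E}[\eta^4]\neq 3$; since a standardized Gaussian has $\mathbb{E}[\eta^4]=3$, the condition rules out exactly the Gaussian, which is precisely the conclusion of \cref{lem:homl_cond} for HOML.

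There is no genuine obstacle here; the two points that need care are (i) the loss/test-function bookkeeping — making explicit that $t=T'$ is the standard FastICA relation and that the stated loss corresponds to $T(y)=y^4$ up to irrelevant constants — and (ii) justifying $\mathbb{E}[\eta^2]=1$ from whitening plus $\norm{\mat{w}}=1$ rather than simply assuming unit-variance sources. If one wants to keep the algorithmic meaning of \eqref{eq:ica_cond} in view, it is worth recalling (following \citet[A.8]{hyvarinen_independent_2001}) that \eqref{eq:ica_cond} is exactly the condition guaranteeing that the FastICA fixed-point map has nonzero gradient at the candidate solution; the lemma then says that the kurtosis-based FastICA update is non-degenerate at $\eta$ precisely when $\eta$ is non-Gaussian, mirroring \eqref{eq:homl_cond}.
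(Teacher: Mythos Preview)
Your proof is correct and follows essentially the same route as the paper: substitute $t=T'$ for $T(y)=y^4$ into \eqref{eq:ica_cond}, use unit variance from whitening plus $\norm{\mat{w}}=1$, and read off $\mathbb{E}[\eta^4]\neq 3$. You are slightly more explicit than the paper in carrying the factor of $4$ (the paper silently drops it, writing $\mathbb{E}[\eta^4-3\eta^2]$ directly) and in justifying $\mathbb{E}[\eta^2]=1$ from the whitening/unit-norm assumptions rather than asserting it, but the argument is the same.
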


        \begin{proof}
        To see the connection to ICA, we recall \citep[Thm.~8.1]{hyvarinen_independent_2000}, stating that for the estimated sources, \ie, the local optima of \expectation{U(\hat{\eta})}, where $U$ is generally chosen as $U(\eta)=\eta^4$, the optimality condition of the theorem is:
        \begin{align}
            \expectation{}\brackets{\eta\cdot U'(\eta) - U''(\eta)} \neq 0,\\
            \intertext{which becomes for the kurtosis-based formulation (\ie, when $U(\eta)=\eta^4$ ):}
            \expectation{}\brackets{\eta^4  - 3\eta^2} &\neq 0.\\
            \intertext{Or, equivalently:}
            \expectation{}\brackets{\eta^4} &\neq\expectation{}\brackets{3\eta^2} = 3.
        \end{align}

    \end{proof}

\section{Proofs}\label{sec:app_proofs}

    \subsection{Proof of Lemma~\ref{lem:lin_plr_ica}}
        \label{subsec:app_lin_plr_ica}

            \lemlinplrica*
            \begin{proof}
                We can apply the theory of linear ICA~\citep{shimizu_linear_2006,hyvarinen_independent_2000} to identify the sources (in the infinite data limit) up to scaling and permutation. Then, exploiting that \mat{A} is triangular, we can permute its estimated inverse $\mat{W} = \inv{\mat{A}}$ into a lower triangular form. Thus, by knowing the graph (particularly that $Y$ is a leaf node), we can resolve the permutation indeterminacy. Thus, we have the estimate of $\varepsilon$ and the corresponding row in \mat{W}. ICA is invariant to scaling the rows of \mat{W}; however, assuming a specific form of how $\varepsilon$ affects $Y$ is sufficient to resolve this ambiguity.
                Finally, selecting the entry characterizing the $T\to \varepsilon$ relationship gives us the causal effect $\theta$.
            \end{proof}

    \subsection{Proof of Corollary~\ref{cor:multi_T}}
        \label{subsec:app_multi_T}
            \cormultit*
            \begin{proof}
                 We can apply the theory of linear ICA~\citep{shimizu_linear_2006,hyvarinen_independent_2000} to identify the sources (in the infinite data limit) up to scaling and permutation. Then, exploiting that \mat{A} is triangular and that $Y$ is a leaf node, we can permute its estimated inverse $\mat{W} = \inv{\mat{A}}$ into a lower triangular form.
                 However, as opposed to \cref{lem:lin_plr_ica}, this does not resolve the permutation between the different treatment variables, as permuting $(\eta_1, \eta_2)$ keeps \mat{W} triangular. However, the knowledge of the graph (which variable is which treatment component) enables us to resolve this ambiguity.
                 Thus, we have the estimate of $\varepsilon$ and the corresponding row in \mat{W}. ICA is invariant to scaling the rows of \mat{W}; however, assuming a specific form (a PLR model with unit variance $\varepsilon$) of how $\varepsilon$ affects $Y$ is sufficient to resolve this ambiguity.

                Finally, selecting the entries characterizing the $T_1, T_2\to \varepsilon$ relationship gives us the causal effects $\theta_1, \theta_2$.
            \end{proof}

    \subsection{Proof of Theorem~\ref{efficiency}: Asymptotic relative efficiency}  \label{sec:app_asymp_var}
        We compare the asymptotic variances of both higher-order \gls{oml}~\citep{mackey2018orthogonalICML} and FastICA~\citep{hyvarinen1997fast} estimators. We are able to do this as the asymptotic normality of the treatment effect estimate is implied by the stated assumptions.
        In the following, we will denote the asymptotic variance of the estimated parameter $\hat\theta$ as:
        \begin{align}
            \textup{AsymptoticVariance}(\hat\theta) \defeq \lim_{n\to\infty}\Var(\sqrt{n}(\hat\theta\!-\!\theta)).
        \end{align}

        \subsubsection{Assumptions}

        We include all assumptions of both Thm.~4.5 of~\cite{auddy_large_2023} and those of Thm.~9 of~\cite{mackey2018orthogonalICML} that are required for our asymptotic variance results.

\begin{assum}[Auddy--Yuan ICA distribution class and CLT regime]\label{ass:auddy-yuan}
Let $Z\in\mathbb{R}^d$ denote the observed vector to which we apply ICA.
(In our \gls{plr} application, $d=p+2$ and $Z=(X^\top,T,Y)^\top$.)
Let $\widetilde Z$ be a whitened version of $Z$, e.g.
\[
\widetilde Z \defeq \Cov(Z)^{-1/2} Z,
\qquad\text{so that}\qquad
\Cov(\widetilde Z)=I_d.
\]
Assume $\widetilde Z$ follows a (whitened) ICA model
\[
\widetilde Z = A S,\qquad A\in\mathcal{O}(d),
\]
where $S=(S_1,\dots,S_d)^\top$ has mutually independent coordinates.

\smallskip
\noindent
\textbf{(Source normalization.)}
For every $j\in[d]$, $\E[S_j]=0$ and $\E[S_j^2]=1$.

\smallskip
\noindent
\textbf{(Non-Gaussianity and moment bounds.)}
There exist constants $\epsilon_1>0$ and $M_1,M_2>0$ such that for every $j\in[d]$,
\[
M_1^{-1}\le \bigl|\kappa_4(S_j)\bigr| \le M_1,
\qquad
\E\bigl[|S_j|^{8+\epsilon_1}\bigr]\le M_2,
\]
where $\kappa_4(S_j)\defeq \E[S_j^4]-3$ is the excess kurtosis.

\smallskip
\noindent
\textbf{(Auddy--Yuan Thm.~4.5 additional requirements.)}
For the bilinear-form normal approximation in \citet[Thm.~4.5]{auddy_large_2023}, assume further:
\begin{enumerate}
\item $\epsilon_1\ge 4$ (so the sources have at least $(8+4)=12$ finite moments);
\item the sample $\widetilde Z_1,\dots,\widetilde Z_n$ consists of i.i.d.\ copies of $\widetilde Z$;
\item the high-dimensional sample size regime $n \ge C d^3 (\log d)^2$ holds for a sufficiently large
universal constant $C>0$;
\item for the specific vectors $u,v\in\mathbb{R}^d$ defining the bilinear functional
$u^\top(\widehat A-A)v$, the asymptotic variance $\sigma_{u,v}^2$ from \citet[Thm.~4.5]{auddy_large_2023}
is non-degenerate:
\[
\liminf_{d\to\infty}\sigma_{u,v} > 0.
\]
\end{enumerate}

Equivalently, in the notation of \citet[Thm.~4.5]{auddy_large_2023}, the law of $\widetilde Z$ belongs to
the ICA class
\begin{align*}
\mathcal{P}_{\mathrm{ICA}}(A;\epsilon_1,M_1,M_2)
\defeq
\Bigl\{\mathcal{L}(\widetilde Z): \widetilde Z=AS,\ A\in\mathcal{O}(d),\
(S_j)_{j=1}^d\ \text{indep.},\\
\E[S_j]=0,\ \E[S_j^2]=1,\
M_1^{-1}\le|\kappa_4(S_j)|\le M_1,\
\E|S_j|^{8+\epsilon_1}\le M_2
\Bigr\}.
\end{align*}
\end{assum}

\begin{remark}[Meaning of Assumption~\ref{ass:auddy-yuan}]\label{rem:auddy-yuan-meaning}
Assumption~\ref{ass:auddy-yuan} packages two layers of conditions:

\begin{enumerate}
\item \textbf{Whitening / orthogonal mixing ($A\in\mathcal{O}(d)$).}
Whitening enforces $\Cov(\widetilde Z)=I_d$, which reduces the general ICA ambiguity
to an \emph{orthogonal} mixing matrix $A$.
This is analytically convenient because the unmixing matrix is $A^{-1}=A^\top$.

\item \textbf{Source normalization ($\E[S_j]=0$, $\E[S_j^2]=1$).}
ICA is invariant to componentwise rescaling: replacing $S_j$ by $c_j S_j$ can be absorbed by
rescaling the corresponding column of $A$.
The mean/variance constraints fix this scale indeterminacy (up to sign).

\item \textbf{Uniform non-Gaussianity via kurtosis bounds.}
The lower bound $|\kappa_4(S_j)|\ge M_1^{-1}$ prevents components from being (nearly) Gaussian,
which would make kurtosis-based identification ill-conditioned.
The upper bound $|\kappa_4(S_j)|\le M_1$ prevents extremely heavy-tailed sources from dominating
fourth-order statistics.

\item \textbf{High-moment control ($\E|S_j|^{8+\epsilon_1}\le M_2$).}
This ensures that empirical higher-order statistics (used by kurtosis-based ICA procedures)
concentrate around their population values.
In \citet[Thm.~4.5]{auddy_large_2023} they require the stronger $\epsilon_1\ge 4$ so that
$(8+\epsilon_1)\ge 12$ moments exist, which is used to obtain a Berry--Esseen-type normal
approximation for bilinear forms of the ICA estimator.

\item \textbf{High-dimensional CLT scaling ($n \gtrsim d^3(\log d)^2$).}
Theorem~4.5 is a \emph{high-dimensional} normal approximation bound.
The scaling $n \ge C d^3(\log d)^2$ is a sufficient regime ensuring the non-asymptotic
remainder terms in the normal approximation are $o(1)$ as $d\to\infty$.

\item \textbf{Non-degenerate variance ($\liminf \sigma_{u,v}>0$).}
The bilinear functional $u^\top(\widehat A-A)v$ can be asymptotically degenerate for particular
choices of $(u,v)$ (e.g.\ if the leading variance term cancels).
The condition $\sigma_{u,v}$ bounded away from $0$ ensures the limiting Gaussian has positive
variance, so a meaningful CLT statement applies.
\end{enumerate}
\end{remark}

        \begin{assum}[Assumptions of Thm.~4.5 of \cite{auddy_large_2023}]\label{ass:auddy-yuan-thm45}
        
        \begin{enumerate}[leftmargin=*,nolistsep, label=(\roman*)]
            \item Let $X\in\mathbb{R}^d$ follow a (whitened) ICA model $X = AS,\qquad A\in \mathcal{O}(d),$ where $S=(S_1,\dots,S_d)^\top$ has independent components.
            \item Assume there exist constants $\epsilon\ge 4$ and $M_1,M_2>0$ such that for all $j\in[d]$,
            \begin{align}
        \mathbb{E}[S_j]&=0,\qquad \mathbb{E}[S_j^2]=1,\\
        M_1^{-1}&\le |\kappa_4(S_j)|\le M_1,\qquad
        \mathbb{E}|S_j|^{8+\epsilon}\le M_2,
            \end{align}
        where $\kappa_4(S_j):=\mathbb{E}[S_j^4]-3$ denotes the excess kurtosis.
        \item Let $X_1,\dots,X_n$ be i.i.d.\ copies of $X$.
        \item For the bilinear-form CLT in \citet[Thm.~4.5]{auddy_large_2023}, additionally assume:
        (i) $n \ge C\, d^3(\log d)^2$ for a universal constant $C>0$, and
        (ii) for the specific $u,v\in\mathbb{R}^d$ of interest, the asymptotic variance is nondegenerate,
        \[
        \liminf_{d\to\infty} \sigma_{u,v} > 0,
        \]
        where $\sigma_{u,v}^2 := u^\top A D_v A^\top u$ (with $D_v$ as defined in \citet[Thm.~4.5]{auddy_large_2023}).
        \end{enumerate}
        \end{assum}

    \begin{assum}[Assumptions of Thm.~9 of \cite{mackey2018orthogonalICML}]\label{ass:mackey-thm9}
    
        \begin{enumerate}[leftmargin=*,nolistsep, label=(\roman*)]
            \item (PLR model) Observe $Z=(T,Y,X)$ satisfying
        \begin{align*}
        Y &= \theta_0 T + f_0(X) + \varepsilon,
        & \mathbb{E}[\varepsilon\mid X,T]=0 \ \text{a.s.},\\
        T &= g_0(X) + \eta,
        & \mathbb{E}[\eta\mid X]=0 \ \text{a.s.},
        \end{align*}
        where $(\eta,\varepsilon)$ are unobserved disturbances with distributions independent of
        $(\theta_0,f_0,g_0)$.
        \item (Non-Gaussian treatment residual) Assume there exists $r\in\mathbb{N}$ such that
        \[
        \mathbb{E}[\eta^{r+1}] \neq
        r\,\mathbb{E}\!\left[\mathbb{E}[\eta^2\mid X]\;\mathbb{E}[\eta^{r-1}\mid X]\right].
        \]
        Equivalently (as used by \citet[Thm.~9]{mackey2018orthogonalICML}), the conditional distribution
        of $\eta\mid X$ is not almost surely Gaussian.
        \item Define the second-order orthogonality multi-index set
        \[
        \mathcal{S}:=\{\alpha\in\mathbb{N}^4:\|\alpha\|_1\le 2\}\setminus\{(1,0,0,1),(0,1,0,1)\}.
        \]
        \end{enumerate}

\paragraph{Multi-index notation and the meaning of the second-order orthogonality set.}
In \citet[Thm.~9]{mackey2018orthogonalICML}, the moment function depends on a vector of
\emph{four} nuisance functions
\[
h(x) \equiv (h_1(x),h_2(x),h_3(x),h_4(x))
:= \bigl(q(x),\, g(x),\, \mu_{r-1}(x),\, \mu_r(x)\bigr),
\]
where typically $\mu_k(x) := \mathbb{E}[\eta^k \mid X=x]$ and $q,g$ are nuisance regressions.

For a multi-index $\alpha=(\alpha_1,\alpha_2,\alpha_3,\alpha_4)\in\mathbb{N}^4$, define its total order
$|\alpha|_1 := \sum_{j=1}^4 \alpha_j$.  Let $D^\alpha m$ denote the mixed partial derivative of $m$
with respect to the \emph{nuisance coordinates}:
\[
D^\alpha m(z;\theta;h)
:= \frac{\partial^{|\alpha|_1}}{\partial h_1^{\alpha_1}\partial h_2^{\alpha_2}
\partial h_3^{\alpha_3}\partial h_4^{\alpha_4}}\, m(z;\theta;h),
\]
evaluated at $h=h_0(X)$ (i.e., at the true nuisance values).

\medskip
\noindent
\textbf{$\mathcal{S}$-orthogonality.}
Given a set $\mathcal{S}\subseteq\mathbb{N}^4$, the moment $m$ is called
$\mathcal{S}$-orthogonal (w.r.t.\ $h_0$) if
\[
\mathbb{E}\!\left[D^\alpha m\!\bigl(Z;\theta_0;h_0(X)\bigr)\,\middle|\,X\right]=0,
\qquad \forall\,\alpha\in\mathcal{S}.
\]

\medskip
\noindent
\textbf{The “second-order orthogonality multi-index set” used in Thm.~9.}
Mackey et al.\ use the set
\[
\mathcal{S}
:= \Bigl\{\alpha\in\mathbb{N}^4:\ |\alpha|_1\le 2\Bigr\}
\setminus \{(1,0,0,1),(0,1,0,1)\}.
\]
This means: \emph{all} conditional expectations of mixed nuisance-derivatives of total order
$\le 2$ are required to vanish, \emph{except} the two cross-derivatives that couple the
fourth nuisance coordinate $h_4=\mu_r$ with $h_1=q$ or $h_2=g$.

Equivalently, $\mathcal{S}$-orthogonality with the above $\mathcal{S}$ is the collection of conditions
\begin{align*}
\mathbb{E}\!\left[\frac{\partial m}{\partial h_j}\!\bigl(Z;\theta_0;h_0(X)\bigr)\,\middle|\,X\right] &= 0,
&& j=1,2,3,4,\\
\mathbb{E}\!\left[\frac{\partial^2 m}{\partial h_j^2}\!\bigl(Z;\theta_0;h_0(X)\bigr)\,\middle|\,X\right] &= 0,
&& j=1,2,3,4,\\
\mathbb{E}\!\left[\frac{\partial^2 m}{\partial h_j\,\partial h_k}\!\bigl(Z;\theta_0;h_0(X)\bigr)\,\middle|\,X\right] &= 0,
&& 1\le j<k\le 4,\ (j,k)\notin\{(1,4),(2,4)\}.
\end{align*}
In words: $m$ is “second-order orthogonal” in every nuisance direction and every pairwise
interaction \emph{except} it does not enforce orthogonality for the mixed interactions
$(q,\mu_r)$ and $(g,\mu_r)$.

    \end{assum}

        \subsubsection{Asymptotic variance of higher-order OML}
            We state the asymptotic variance of the higher-order \gls{oml} estimator from \citep[Thm.~9]{mackey2018orthogonalICML}.

            For \textbf{OML}, the asymptotic variance for $\theta$ is (with test function $U$)
            \begin{align}
                 \mathrm{Var}(\theta_{\mathrm{OML}}) &= J^{-1} V J^{-1}, \\
                 J &= \mathbb{E}[\nabla_\theta m] \quad \text{and}  \quad V = Cov(m),\\
                 \nabla_\theta m &= \varepsilon ( U'(\eta) - \mathbb{E}[U'(\eta)] - \eta \mathbb{E}[U''(\eta)] ),\\
                 J &= \mathbb{E}[\eta U'(\eta) - \eta^2 \mathbb{E}[U''(\eta)] ].\\
                 \intertext{For unit variance, this simplifies to}
                  J &= \mathbb{E}[\eta U'(\eta) - U''(\eta)].
              \end{align}

              yielding the asymptotic variance for the outcome noise  $\varepsilon=Y - q(X) - \theta \eta $
              \begin{align}
                    \textup{AsymptoticVariance}(\hat\theta_{\text{\tiny OML}}) &= \dfrac{\mathbb{E}[ \varepsilon^2 ( U'(\eta) - \mathbb{E}[U'(\eta)] - \eta \mathbb{E}[U''(\eta)] )^2] }{ (\mathbb{E}[\eta U'(\eta) - \eta^2 \mathbb{E}[U''(\eta)] ])^2}.\label{eq:var_homl}\\
                    \intertext{As we assumed unit variance for the noises, this yields:}
                    &=  \dfrac{\mathbb{E}[( U'(\eta) - \mathbb{E}[U'(\eta)] - \eta \mathbb{E}[U''(\eta)] )^2] }{ (\mathbb{E}[\eta U'(\eta) - \mathbb{E}[U''(\eta)] ])^2}.\label{eq:var_homl_unit_var}
                \end{align}

    \subsubsection{Asymptotic variance for FastICA} %

    \paragraph{Setup.}

    We consider the ICA model in the context of a partially linear regression (PLR) setup. Let $Z = \mat{B} S$, where $Z \in \mathbb{R}^d$ are the observed signals, $S$ are the independent sources, and $\mat{B} \in \mathbb{R}^{d \times d}$ is the unwhitened mixing matrix. The whitened observations are $X = \Sigma^{-1/2} Z$, where $\Sigma = \mathrm{Cov}(Z)$. Then the whitened mixing matrix is $\mat{A} = \Sigma^{-1/2} \mat{B}$, so $X = \mat{A} S$.

    We want to compute the asymptotic variance of the FastICA estimate of $B_{3,2} = \theta$, the treatment effect in a PLR model.

    \paragraph{Using \citep[Thm~4.5]{auddy_large_2023}.}
    Consider the vectors
    \begin{equation}
        u = \Sigma^{1/2} e_3, \quad v = e_2 \in \reals^d,
    \end{equation}
    which ensure that
    \begin{talign}
    u^\top (\widehat{\mat{A}}-\mat{A})v=e_3^\top (\widehat{\mat{B}}-\mat{B})e_2 = \widehat{B}_{3,2}-B_{3,2}
    = \hat\theta_{ICA} - \theta.
    \end{talign}
    \citet[Thm~4.5]{auddy_large_2023} gives the asymptotic variance of the bilinear form $u^\top(\widehat{\mat{A}} - \mat{A})v$:

    \begin{equation}
        \sqrt{n} \cdot u^\top(\widehat{\mat{A}} - \mat{A})v \xrightarrow{d} \mathcal{N}(0, \sigma^2_{u,v}) \quad \text{with} \quad \sigma^2_{u,v} = u^\top \mat{A} D_v \mat{A}^\top u,
    \end{equation}

    where $\mat{D}_v$ is a diagonal matrix with diagonal entries
    \begin{equation}
        (D_v)_{kk} = \begin{cases}
            \frac{\mathrm{Var}(S_2^3)}{\kappa_4(S_2)^2} & \text{if } k \ne 2, \\
            0 & \text{if } k = 2.
        \end{cases}
    \end{equation}

    With this choice of $u$ and $v$, the asymptotic variance is
    \begin{align}
        \sigma^2_{u,v} &= u^\top \mat{A} \mat{D}_v \mat{A}^\top u = e_3^\top \mat{B} \mat{D}_v \mat{B}^\top e_3 = \sum_{k=1}^d B_{3k}^2 (D_v)_{kk}.
    \end{align}

    Thus,
    \begin{equation}
        \sigma^2_{u,v} = \sum_{k \ne 2} B_{3k}^2 \cdot \frac{\mathrm{Var}(S_2^3)}{\kappa_4(S_2)^2},
    \end{equation}
    where $\kappa_4$ is the excess kurtosis.

    \paragraph{Substituting the PLR parameterization.}

    In our partially linear regression model, the third row of $\mat{B}$ is
    \begin{equation}
        B_{3,:} = \left( b + a \theta,\, \theta,\, 1 \right).
    \end{equation}

    So we obtain by plugging in $S_2=\eta:$
    \begin{align}
    \textup{AsymptoticVariance}(\hat\theta_{\text{\tiny ICA}})
        = \sigma^2_{u,v} &= \left( (b + a\theta)^2 + 1 \right) \cdot \frac{\mathrm{Var}(\eta^3)}{\kappa_4(\eta)^2} = \left( (b + a\theta)^2 + 1 \right) \cdot \frac{\mathrm{Var}(U'(\eta))}{\mathbb{E}(\eta^4-3)^2}.
    \end{align}

    \begin{remark}
        If $a,b$ are vectors, \ie, when $X$ is vector-valued, then the above expression becomes:
         \begin{equation}
        \textup{AsymptoticVariance}(\hat\theta_{\text{\tiny ICA}}) = \left( \Vert b + a\theta\Vert_2^2 + 1 \right) \cdot \frac{\mathrm{Var}(\eta^3)}{\kappa_4(\eta)^2}.
    \end{equation}
    \end{remark}

    \subsection{Proof of Corollary~\ref{cor:ica_gauss}}
                \label{subsec:app_cor_ica_gauss}
                    \coricagauss*
                    \begin{proof}
                    The log-likelihood of observed causal variables is expressed with change-of-variables in terms of the noises:
                    \begin{align*}
                        \log p_Z(Z) &= \log p_S(S) + \logabsdet{\mat{W}},\\
                        \intertext{where \mat{W} has the following structure}
                        \mat{W} & = \begin{bmatrix}
                                \Id{\dim X} & 0 & 0\\
                                {\color{figred}\mat{A}}  & \Id{\dim T} & 0\\
                                {\color{figgreen}\mathbf{b}} & \color{figblue}\boldsymbol{\theta} & 1
                            \end{bmatrix}\!.
                    \end{align*}
                        If the covariates have a Gaussian noise, then any rotation on the block of covariates will maintain the same likelihood---however, this will also change the direct effect coefficients of $X$ on $Y,T$, \ie, $\mat{A}\in \rr{\dim T \times \dim X}, \mathbf{b}\in \rr{1 \times \dim X}$. Importantly, this does not change the treatment effect coefficients $\color{figblue}\boldsymbol{\theta}$. That is, we can define an equivalence class $\mat{W}_O = \mat{W}\mat{O},$ where \mat{O} is a block-orthogonal matrix  $\mat{O}= \diag{\mat{O}_{\dim X}, \Id{\dim T}, 1}$ with $\mat{O}_{\dim X}$ being a $(\dim X\times \dim X)-$dimensional orthogonal map. In this case, the inverse maps solving the \gls{bss} problem will capture the treatment effect. Thus, we can apply the same argument as in \cref{lem:lin_plr_ica}.
                        Since both the treatment and outcome noise are non-Gaussian, they can be separated from the Gaussian covariate block by ICA. Thus, the treatment effect is identifiable.
                    \end{proof}

\section{Compute Usage}\label{subsec:app_compute}
All experiments ran on CPU-only nodes of an HTCondor cluster, each requesting 4 cores and 32\,GB RAM. A single sweep of the full parameter grid comprises approximately 300 parallel jobs for the demand estimation experiments plus 10--20 ablation jobs, with individual runtimes ranging from under one hour (small sample sizes) to roughly two days ($n{=}5{,}000$). Including preliminary and debugging runs throughout the project, we estimate a total compute budget of approximately 50{,}000 CPU-hours. No GPUs were used.

\section{Additional Experiments: Treatment Noise and PLR Coefficient Ablations}\label{sec:app_exp}

This section provides supplementary experimental results that complement the main text, including additional ablation studies over ICA parameters, performance across different data generating processes, and extended comparisons with alternative methods.

\subsection{Demand Estimation Ablations}\label{subsec:demand_ablations}

These figures provide additional details for the demand estimation experiments in \cref{subsec:exp_homl}. \cref{fig:ica_mcc} shows source identification quality measured by \acrfull{mcc}. \cref{fig:ica_mse} shows MSE heatmaps across sample sizes and covariate dimensions or distribution shape parameters. \cref{fig:filtered_diff_heatmaps} shows RMSE and bias differences between ICA and higher-order OML for filtered configurations ($c_{\text{ICA}} < 1.5$; treatment coefficient $a = 0.5$, outcome coefficient $b = 0.05$, $\theta = 1$).

\begin{figure}[tb]
    \centering
    \includesvg[width=7cm, keepaspectratio]{figures/mcc_sample_size_vs_beta_ica_mean}
    \caption{\textbf{Source identification via \acrfull{mcc} for ICA in linear \gls{plr}.} Means from $20$ seeds ($d=10$). \gls{mcc}~\citep{hyvarinen_unsupervised_2016} measures source recovery (0--1; higher is better). Gaussian covariates ($\beta=2$) yield lowest MCC, as predicted by theory.}
    \label{fig:ica_mcc}
\end{figure}

\begin{figure}[tb]
    \centering
    \begin{subfigure}[b]{0.3\textwidth}
        \centering
        \includesvg[width=\textwidth, keepaspectratio]{figures/bias_sample_size_vs_dim_ica_mean}
    \end{subfigure}
    \begin{subfigure}[b]{0.32\textwidth}
        \centering
        \includesvg[width=\textwidth, keepaspectratio]{figures/bias_sample_size_vs_beta_ica_mean}
    \end{subfigure}
    \caption{\textbf{Treatment effect estimation MSE for ICA with multinomial treatment noise in linear \gls{plr}.} Means from $20$ seeds; {\color{blue}blue} = lower, {\color{figred}red} = higher relative MSE. \textbf{Left:} Covariate dimension $d$ vs.\ sample size ($\beta=1$). \textbf{Right:} Generalized normal shape parameter $\beta$ of the covariate noise density vs.\ sample size ($d=10$).}
    \label{fig:ica_mse}
\end{figure}

\begin{figure}[tb]
    \centering
    \begin{subfigure}[b]{0.48\textwidth}
        \centering
        \includesvg[width=\textwidth, keepaspectratio]{figures/rmse_diff_sample_size_vs_dim_filtered_below_1.5}
        \caption{RMSE difference vs.\ covariate dimension $d$}
    \end{subfigure}
    \hfill
    \begin{subfigure}[b]{0.48\textwidth}
        \centering
        \includesvg[width=\textwidth, keepaspectratio]{figures/rmse_diff_sample_size_vs_beta_filtered_below_1.5}
        \caption{RMSE difference vs.\ $\beta$}
    \end{subfigure}

    \vspace{0.5em}

    \begin{subfigure}[b]{0.48\textwidth}
        \centering
        \includesvg[width=\textwidth, keepaspectratio]{figures/bias_diff_sample_size_vs_dim_filtered_below_1.5}
        \caption{Bias difference vs.\ covariate dimension $d$}
    \end{subfigure}
    \hfill
    \begin{subfigure}[b]{0.48\textwidth}
        \centering
        \includesvg[width=\textwidth, keepaspectratio]{figures/bias_diff_sample_size_vs_beta_filtered_below_1.5}
        \caption{Bias difference vs.\ $\beta$}
    \end{subfigure}
    \caption{\textbf{RMSE and bias differences (ICA $-$ higher-order OML) in linear \gls{plr}.} Results filtered to configurations where $c_{\text{ICA}} < 1.5$. Coefficient vectors have a single non-zero entry: treatment coefficient $a = 0.5$, outcome coefficient $b = 0.05$, $\theta = 1$ (yielding $c_{\text{ICA}} = 1 + (b + a\theta)^2 \approx 1.30$). Means from $20$ seeds; {\color{blue}blue} = ICA outperforms higher-order OML, {\color{figred}red} = higher-order OML outperforms ICA. \textbf{Top row:} RMSE differences. \textbf{Bottom row:} Bias differences. \textbf{Left column:} Covariate dimension $d$ vs.\ sample size ($\beta=4$). \textbf{Right column:} Generalized normal shape parameter $\beta$ of the covariate noise density vs.\ sample size ($d=10$).}
    \label{fig:filtered_diff_heatmaps}
\end{figure}

\subsection{Treatment Noise Distribution Ablation Study}
\label{sec:noise_ablation}

We study how the treatment noise distribution $\eta$ affects the relative performance of higher-order OML and ICA estimators.

  \subsubsection{Experimental Setup}

  All experiments use $n = 5{,}000$ samples, $d = 10$ covariates (support size), and 20 seeds per configuration. We employ 2-fold cross-fitting for nuisance function estimation. To ensure robustness, we randomize the model coefficients: $\theta \in [0.001, 5]$, $a \in [-10, 10]$, and $b \in [-0.5, 0.5]$, with 20 random configurations per noise distribution.

  We evaluate four noise distributions for the treatment noise $\eta$ spanning heavy-tailed and bounded regimes ($\kappa$ denotes the excess kurtosis):
  \begin{itemize}
      \item \textbf{Discrete}: Asymmetric distribution with mass at $\{0, -0.5, -2, -4\}$ and probabilities $(0.65, 0.2, 0.1, 0.05)$; $\kappa \approx 4.97$
      \item \textbf{Laplace}: Heavy-tailed with scale $1/\sqrt{2}$ for unit variance; $\kappa = 3.0$
      \item \textbf{Uniform}: Bounded on $[-\sqrt{3}, \sqrt{3}]$; $\kappa \approx -1.2$
      \item \textbf{Rademacher}: Binary $\{-1, +1\}$ with equal probability; $\kappa = -2.0$
  \end{itemize}

  \subsubsection{Results: Noise Distribution Ablation}

  \Cref{fig:distribution_heatmap} provides a detailed heatmap visualization sorted by excess kurtosis. ICA consistently achieves lower RMSE across all tested distributions, with particularly strong performance for heavy-tailed distributions (high kurtosis).

  \subsection{Treatment Noise Variance Ablation Study}
  \label{sec:variance_ablation}

  To further investigate the interplay between distribution shape and scale, we conduct a variance ablation study varying both the generalized normal shape parameter $\beta \in \{0.5, 1.0, 1.5, 2.5, 3.0, 4.0\}$ and variance $\sigma^2 \in \{0.25, 0.5, 1.0, 2.0, 4.0\}$. We exclude $\beta = 2$ (Gaussian) as ICA is not applicable when the noise is Gaussian. The treatment effect is fixed at $\theta = 1.0$ with $n = 5{,}000$ samples and 20 seeds.

  \subsubsection{Results: Variance Ablation}

  Figure~\ref{fig:variance_heatmap} presents the RMSE difference (ICA $-$ higher-order OML) across the $(\beta, \sigma^2)$ grid. Key findings include:

  \begin{itemize}
      \item For the heaviest-tailed distribution ($\beta = 0.5$), ICA outperforms higher-order OML across most variance levels, with the largest gains at low variance.
      \item For intermediate shape parameters ($\beta \in \{1.0, 1.5\}$), results are mixed: some configurations show large ICA advantages (e.g., $\beta = 1.0$, $\sigma^2 = 0.5$), but higher-order OML is competitive or better in other settings.
      \item For light-tailed distributions ($\beta \geq 2.5$), both methods perform similarly at moderate to high variance, while higher-order OML tends to be slightly better at low variance.
  \end{itemize}

  The excess kurtosis $\kappa$ decreases monotonically with $\beta$: from $\kappa = 22.2$ at $\beta = 0.5$ to $\kappa = -0.81$ at $\beta = 4.0$. This confirms that ICA's advantage is most pronounced when the treatment noise has high kurtosis (heavy tails), consistent with its theoretical asymptotic variance depending on $\kappa^{-2}$.

\begin{figure}[htbp]
    \centering
    \includesvg[width=\textwidth]{figures/noise_ablation/variance_ablation/combined_diff_heatmap.svg}
    \caption{\textbf{Estimation metric differences (ICA $-$ higher-order OML) across the treatment noise variance ablation grid.} Panels show RMSE, absolute bias, and standard deviation differences (left to right). Rows correspond to generalized normal shape parameter $\beta$; columns correspond to noise variance $\sigma^2$. Blue cells indicate ICA outperforms higher-order OML; red cells indicate higher-order OML outperforms ICA.}
    \label{fig:variance_heatmap}
\end{figure}

\begin{figure}[htbp]
    \centering
    \includesvg[width=\textwidth]{figures/noise_ablation/distribution_diff_heatmap.svg}
    \caption{\textbf{Heatmap of estimation performance differences (ICA $-$ OML) across treatment noise distributions}, sorted by excess kurtosis $\kappa$. Blue cells indicate ICA outperforms OML; red cells indicate OML outperforms.}
    \label{fig:distribution_heatmap}
\end{figure}

 \subsection{Coefficient Ablation Study}
  \label{sec:coefficient_ablation}

  We conduct systematic coefficient ablation experiments to understand how the structural parameters of the partially linear model affect the relative performance of higher-order OML and ICA estimators. Recall from \cref{efficiency} that the ICA variance coefficient $c_{\text{ICA}} = 1 + \|b + a\theta\|_2^2$ directly scales ICA's asymptotic variance.

  \subsubsection{Experimental Design}

  We systematically vary three coefficient parameters while holding the noise distribution fixed (discrete asymmetric distribution with $\kappa \approx 4.97$):

  \paragraph{Coefficient Grid (Fixed).} In the deterministic ablation, we use:
  \begin{itemize}
      \item Treatment coefficient: $a \in \{-0.002, 0.05, -0.43, 1.56\}$
      \item Outcome coefficient: $b \in \{0.003, -0.02, 0.63, -1.45\}$
      \item Treatment effect: $\theta \in \{0.01, 0.1, 0.5, 1.0, 3.0, 10.0\}$
  \end{itemize}
  This yields $4 \times 4 \times 6 = 96$ configurations. Only the first coefficient in each support is non-zero; remaining coefficients are set to zero.

  \paragraph{Randomized Coefficients.} We also conduct experiments with randomized coefficients---drawn from a uniform distribution---across multiple noise distributions:
  \begin{itemize}
      \item Treatment coefficient: $a \sim \mathcal{U}[-10, 10]$
      \item Outcome coefficient: $b \sim \mathcal{U}[-0.5, 0.5]$
      \item Treatment effect: $\theta \sim \mathcal{U}[0.001, 0.2]$
  \end{itemize}
  with 20 random configurations per noise distribution.

  \paragraph{Common Settings.} All experiments use $n = 5{,}000$ samples, $d = 10$ covariates, 20 seeds, 2-fold cross-fitting, and Lasso regularization (for OML) with $\lambda = \sqrt{\log(d)/n}$.

  \subsubsection{The Coefficient Cancellation Effect}

  A key finding is the \emph{coefficient cancellation} phenomenon: when $b + a\theta \approx 0$, the ICA variance coefficient approaches its minimum value of 1, making ICA highly efficient. Table~\ref{tab:cancellation_examples} shows examples where near-perfect cancellation occurs.

  \begin{table}[htbp]
  \centering
  \caption{\textbf{Examples of coefficient cancellation where $c_{\text{ICA}} \approx 1$.}}
  \label{tab:cancellation_examples}
  \begin{tabular}{rrrr}
  \toprule
  $a$ & $b$ & $\theta$ & $c_{\text{ICA}}$ \\
  \midrule
  $-0.002$ & $0.003$ & $1.00$ & $1.000001$ \\
  $0.050$ & $-0.020$ & $0.50$ & $1.000025$ \\
  $-0.430$ & $0.003$ & $0.01$ & $1.000002$ \\
  $0.050$ & $-0.020$ & $0.10$ & $1.000049$ \\
  \bottomrule
  \end{tabular}
  \end{table}

  Conversely, when coefficients reinforce rather than cancel, $c_{\text{ICA}}$ can become very large. For instance, with $a = 1.56$, $b = 0.003$, and $\theta = 10$, we obtain $c_{\text{ICA}} = 244.45$.

  \subsubsection{Results: Performance Stratified by ICA Variance Coefficient}

  Figure~\ref{fig:rmse_vs_ica_var} shows how RMSE depends on the ICA variance coefficient. \Cref{fig:ica_main_result} right (in the main text, \cref{sec:exp}) summarizes performance across different $c_{\text{ICA}}$ regimes, showing that while ICA wins overall (72.9\%), OML is preferable in the medium $c_{\text{ICA}}$ regime.

\begin{figure}[htbp]
    \centering
    \begin{subfigure}[b]{0.48\textwidth}
        \centering
        \includesvg[width=\textwidth]{figures/noise_ablation/coefficient_ablation/rmse_vs_ica_var_coeff.svg}
        \caption{RMSE vs.\ ICA variance coefficient $c_{\text{ICA}} = 1 + \|b + a\theta\|_2^2$ on log-log scale. Blue circles: higher-order OML; orange squares: ICA. ICA achieves substantially lower RMSE when $c_{\text{ICA}}$ is small (coefficient cancellation regime).}
        \label{fig:rmse_vs_ica_var}
    \end{subfigure}
    \hfill
    \begin{subfigure}[b]{0.48\textwidth}
        \centering
        \includesvg[width=\textwidth]{figures/noise_ablation/coefficient_ablation/rmse_diff_vs_ica_var_coeff.svg}
        \caption{RMSE difference (ICA $-$ higher-order OML) vs.\ ICA variance coefficient. Blue points indicate ICA outperforms higher-order OML; red points indicate higher-order OML outperforms ICA. The transition occurs around $c_{\text{ICA}} \approx 1.5$.}
        \label{fig:rmse_diff_vs_ica_var}
    \end{subfigure}
    \caption{\textbf{Effect of the ICA variance coefficient on estimator performance.}}
    \label{fig:ica_var_coeff_analysis}
\end{figure}

  Key observations:
  \begin{itemize}
      \item \textbf{Low $c_{\text{ICA}}$ regime}: ICA strongly dominates with 96.3\% win rate, achieving 26\% lower RMSE on average.
      \item \textbf{Medium $c_{\text{ICA}}$ regime}: Higher-order OML becomes preferable, winning 64.3\% of configurations.
      \item \textbf{High $c_{\text{ICA}}$ regime}: Mixed results; ICA can still perform well despite high variance coefficient if other factors are favorable.
  \end{itemize}

  \subsubsection{Results: Randomized Coefficients Across Distributions}

  Table~\ref{tab:coeff_randomized} shows results for randomized coefficient experiments across six noise distributions.

  \begin{table}[htbp]
  \centering
  \caption{\textbf{Randomized coefficient ablation across noise distributions.} ($n = 5{,}000$, 20 configs $\times$ 20 replications each). Coefficients drawn from $a \in [-10, 10]$, $b \in [-0.5, 0.5]$, $\theta \in [0.001, 0.2]$.}
  \label{tab:coeff_randomized}
  \begin{tabular}{lrrrrrr}
  \toprule
  Distribution & $\kappa$ & OML RMSE & ICA RMSE & $|\Delta|$ Bias & Winner \\
  \midrule
  Discrete & 4.97 & 0.0191 & \textbf{0.0148} & 0.4 & ICA \\
  Laplace & 3.06 & 0.0362 & \textbf{0.0161} & 0.0 & ICA \\
  Gennorm ($\beta{=}4$) & $-0.80$ & 0.0292 & \textbf{0.0163} & 1.4 & ICA \\
  Uniform & $-1.19$ & 0.0172 & \textbf{0.0125} & 0.3 & ICA \\
  Rademacher & $-2.00$ & 0.0148 & \textbf{0.0127} & 0.2 & ICA \\
  \bottomrule
  \end{tabular}
  \end{table}

  ICA consistently outperforms higher-order OML across all distributions when averaging over randomized coefficient configurations, with particularly strong advantages for heavy-tailed distributions (Laplace).

  \subsubsection{Dependence on Treatment Effect Magnitude}

  Figure~\ref{fig:rmse_diff_boxplot} shows how performance varies with the treatment effect $\theta$, with \cref{tab:by_treatment_effect} providing detailed statistics. Larger treatment effects tend to increase $c_{\text{ICA}}$ (since $\|b + a\theta\|^2$ grows quadratically in $\theta$ when $a \neq 0$), which can degrade ICA's relative performance. Additional visualizations are provided in \cref{fig:combined_heatmap,fig:coeff_scatter}, showing RMSE and bias differences across noise distributions and coefficient configurations.

  \begin{table}[htbp]
  \centering
  \caption{\textbf{Performance by treatment effect magnitude.} (fixed coefficient ablation, 16 configurations per $\theta$ value).}
  \label{tab:by_treatment_effect}
  \begin{tabular}{rrrr}
  \toprule
  $\theta$ & Avg.\ $c_{\text{ICA}}$ & ICA Wins & Median RMSE Diff \\
  \midrule
  0.01 & 1.62 & 12/16 (75\%) & $-0.0023$ \\
  0.10 & 1.62 & 12/16 (75\%) & $-0.0018$ \\
  0.50 & 1.73 & 12/16 (75\%) & $-0.0015$ \\
  1.00 & 2.16 & 11/16 (69\%) & $-0.0008$ \\
  3.00 & 7.15 & 11/16 (69\%) & $-0.0005$ \\
  10.00 & 65.92 & 12/16 (75\%) & $-0.0012$ \\
  \bottomrule
  \end{tabular}
  \end{table}

  \subsection{Summary}

  The coefficient ablation experiments reveal that ICA's performance is governed by the ICA variance coefficient $c_{\text{ICA}} = 1 + \|b + a\theta\|_2^2$:
  \begin{itemize}
      \item When coefficients nearly cancel ($c_{\text{ICA}} \approx 1$), ICA achieves substantially lower variance than higher-order OML.
      \item The transition region $c_{\text{ICA}} \in [1.5, 5]$ marks where higher-order OML becomes competitive.
      \item Higher-order OML exhibits more stable performance across configurations (std = 0.0031) compared to ICA (std = 0.0076).
      \item The treatment effect magnitude affects $c_{\text{ICA}}$ quadratically, but ICA remains competitive across the tested range.
  \end{itemize}

  \begin{figure}[htbp]
      \centering
      \includesvg[width=0.7\textwidth]{figures/noise_ablation/coefficient_ablation/rmse_diff_boxplot_by_te.svg}
      \caption{\textbf{Box plots of RMSE difference (ICA $-$ higher-order OML) by treatment effect.} Negative values indicate ICA advantage. The median difference remains negative across all treatment effect values.}
      \label{fig:rmse_diff_boxplot}
  \end{figure}

  \begin{figure}[htbp]
      \centering
      \includesvg[width=\textwidth]{figures/noise_ablation/heatmap_combined_n20_tc-10.0to10.0_oc-0.5to0.5_te0.0to0.2.svg}
      \caption{\textbf{Combined heatmap showing RMSE, bias, and standard deviation differences across all randomized coefficient configurations.} Organized by treatment noise distribution $\eta$ and outcome coefficient bin. }
      \label{fig:combined_heatmap}
  \end{figure}

  \begin{figure}[htbp]
      \centering
      \includesvg[width=\textwidth]{figures/coeff_scatter_rmse_grid_n20_tc-10.0to10.0_oc-0.5to0.5_te0.0to5.0}
       \caption{\textbf{RMSE difference grid for randomized coefficients.} ($a \in [-10, 10]$, $b \in [-0.5, 0.5]$, $\theta \in [0, 5]$). Each panel shows RMSE difference vs.\ a different variable: treatment effect $\theta$ (top-left), treatment coefficient $a$ (top-right), outcome coefficient $b$ (bottom-left), and ICA variance coefficient (bottom-right). Blue: ICA better; red: higher-order OML better.}
      \label{fig:coeff_scatter}
  \end{figure}

\subsection{Nonlinear PLR Ablations}\label{subsec:nonlinear_plr_ablations}

These figures provide additional details for the nonlinear PLR experiments in \cref{subsec:lin_nonlin}.

\begin{figure}[tb]
    \centering
    \includesvg[width=0.7\textwidth, keepaspectratio]{figures/ica_mse_heatmap_loc_scale_n5000.svg}
    \caption{\textbf{Effect of location-scale transformation on treatment effect estimation in nonlinear \gls{plr}.} Shows MSE across different transformation parameters.}
    \label{fig:ica_mse_heatmap_loc_scale}
\end{figure}

\begin{figure}[tb]
    \centering
    \includesvg[width=0.7\textwidth, keepaspectratio]{figures/ica_mse_vs_beta_nonlinear_n5000.svg}
    \caption{\textbf{MSE vs.\ distribution shape parameter $\beta$ in nonlinear \gls{plr}.} Shows performance across generalized normal distributions with varying tail heaviness.}
    \label{fig:ica_mse_vs_beta}
\end{figure}

\begin{figure}[tb]
    \centering
    \includesvg[width=0.7\textwidth, keepaspectratio]{figures/heatmap_dimension_vs_slope_leaky_relu.svg}
    \caption{\textbf{Effect of leaky ReLU slope and covariate dimension on treatment effect estimation.} Shows MSE heatmap across different nonlinearity strengths and dimensionalities.}
    \label{fig:heatmap_dimension_vs_slope_leaky_relu}
\end{figure}

\begin{figure}[tb]
    \centering
    \includesvg[width=0.7\textwidth, keepaspectratio]{figures/heatmap_dimension_vs_nonlinearity.svg}
    \caption{\textbf{Robustness across covariate dimensions and nonlinearity types.} Shows relative MSE of treatment effect estimation for linear ICA across different covariate dimensions and nonlinear transformations in the \gls{plr} model.}
    \label{fig:dim_vs_nonlinearity}
\end{figure}

\begin{figure}[tb]
    \centering
    \begin{subfigure}[b]{0.48\textwidth}
        \centering
        \includesvg[width=\textwidth,keepaspectratio]{figures/ica_mse_vs_noise_split_nonlinear_n5000.svg}
    \end{subfigure}
    \begin{subfigure}[b]{0.48\textwidth}
        \centering
        \includesvg[width=\textwidth,keepaspectratio]{figures/ica_mse_vs_theta_choice_nonlinear_n5000.svg}
    \end{subfigure}
    \caption{\textbf{Effect of (Left) Gaussian covariates and (Right) treatment effect sampling on relative MSE with linear ICA in nonlinear \gls{plr}.} Mean $\pm$ std from $20$ seeds. Left: ``true'' = Gaussian $X$, Laplace $T,Y$. Right: ``fixed'' = $\theta=1.55$; ``uniform'' = $\theta \sim U[0,1)$; ``normal'' = $\theta \sim \mathcal{N}(0,1)$.}
    \label{fig:mse_vs_support_size_rel}
\end{figure}

\subsection{Multiple Treatment Ablations}\label{subsec:multi_treatment_ablations}

These figures provide additional details for the multiple treatment experiments in \cref{subsec:exp_multi}.

\begin{figure}[tb]
    \centering
    \includesvg[width=\textwidth, keepaspectratio]{figures/heatmap_ica_multi_dimensions_vs_samples_rel.svg}
    \caption{\textbf{ICA relative MSE: covariate dimension vs.\ sample size for $m \in \{1, 2, 5\}$ treatments.} Each panel shows the mean relative MSE of the ICA treatment effect estimate across $20$ seeds. The shared colorscale is clipped to the 2nd--98th percentile for readability (the outlier at $d{=}50$, $n{=}200$, $m{=}2$ reaches $\approx 44$). Settings: Laplace covariates ($\beta = 1$), $30\%$ sparsity in the mixing matrix.}
    \label{fig:ica_multi}
\end{figure}

\subsection{FastICA Ablations}\label{subsec:ablations}

    \paragraph{FastICA loss function.}
    \cref{fig:ica_mse_fun} compares FastICA loss functions for treatment effect estimation. The \texttt{logcosh} loss outperforms \texttt{cube} (which corresponds to optimizing for excess kurtosis) and matches \texttt{exp}, motivating our default choice. Settings: $\theta=1.55$, $d=50$, single treatment, $n=5{,}000$, Laplace covariates ($\beta=1$), coefficients drawn from $\mathcal{N}(0,1)$ with $30\%$ sparsity ($\approx 15$ non-zeros out of $d=50$), averaged over $20$ seeds.

    \begin{figure}[htb]
        \centering
        \includesvg[width=0.5\textwidth,keepaspectratio]{figures/ica_mse_fun.svg}
        \caption{\textbf{MSE across FastICA loss functions.} ($d=50$, $n=5{,}000$, single treatment). Mean $\pm$ std from $20$ seeds.}
        \label{fig:ica_mse_fun}
    \end{figure}

    \paragraph{Sparsity of the DGP.}
    We investigate how sparsity of the coefficient matrix $\mat{A}$ (covariate $\to$ treatment) affects estimation. Sparsity is controlled via Bernoulli masking probability. \cref{fig:ica_mse_vs_dim_sparsity} shows MSE remains stable across sparsity levels; we use $0.4$ throughout. Settings: $d=50$, single treatment, $n=5{,}000$, $20$ seeds. Additionally, \cref{fig:mse_vs_support_size_rel} examines the effect of Gaussian covariates versus Laplace noise and different treatment effect sampling strategies on relative MSE.

    \begin{figure}[htb]
        \centering
        \includesvg[width=0.5\textwidth,keepaspectratio]{figures/ica_mse_vs_dim_sparsity.svg}
        \caption{\textbf{MSE vs.\ sparsity of $\mat{A}: X \to T$.} ($d=50$, $n=5{,}000$). Mean $\pm$ std from $20$ seeds.}
        \label{fig:ica_mse_vs_dim_sparsity}
    \end{figure}

\subsection{FastICA vs.\ DirectLiNGAM Comparison}
\label{sec:directlingam_comparison}

  We conduct systematic comparisons between FastICA and DirectLiNGAM~\citep{shimizu2011directlingam} for causal treatment effect estimation. While both methods exploit non-Gaussianity for identification, they differ fundamentally: FastICA performs blind source separation to recover independent components, whereas DirectLiNGAM directly estimates the causal graph structure via recursive regression and independence testing.

  \subsubsection{Data Generating Process}

  We generate data according to a nonlinear structural equation model:
  \begin{align}
      X_i &= S_i, \quad i = 1, \ldots, d, \label{eq:covariates_ica} \\
      T_j &= S_{d+j} + \sum_{i=1}^{d} A_{ji} \cdot \sigma(X_i), \quad j = 1, \ldots, k, \label{eq:treatment_ica} \\
      Y &= S_{d+k+1} + \sum_{j=1}^{k} \theta_j T_j + \sum_{i=1}^{d} B_i \cdot \sigma(X_i), \label{eq:outcome_ica}
  \end{align}
  where $S \in \mathbb{R}^{d+k+1}$ are mutually independent source signals drawn from a generalized normal distribution with shape parameter $\beta$, $\sigma(\cdot)$ denotes the leaky ReLU activation function, and $A \in \mathbb{R}^{k \times d}$ is a sparse coefficient matrix with sparsity controlled by probability $p_{\text{sparse}}$. The treatment effects $\theta = (1.55, 0.65, -2.45, 1.75, -1.35)^\top$ are fixed across experiments.

  \subsubsection{Experimental Design}

  We compare FastICA and DirectLiNGAM across five experimental dimensions, each with 20 random seeds for statistical stability:
  \begin{enumerate}
      \item \textbf{Sparsity}: $p_{\text{sparse}} \in \{0.0, 0.1, \ldots, 0.9\}$ with $n = 5{,}000$, $d = 50$, $k = 1$
      \item \textbf{Distribution shape}: $\beta \in \{0.5, 1.0, \ldots, 5.0\}$ with $n = 200$, $d = 10$, $k = 1$
      \item \textbf{Sample size}: $n \in \{100, 200, 500, 1000, 2000, 5000\}$ with $d = 10$, $k = 1$
      \item \textbf{Covariate dimension}: $d \in \{2, 5, 10, 20, 50\}$ with $n = 1{,}000$, $k = 1$
      \item \textbf{Number of treatments}: $k \in \{1, 2, 3, 4, 5\}$ with $n = 1{,}000$, $d = 10$
  \end{enumerate}

  \subsubsection{Results: FastICA vs.\ DirectLiNGAM Comparison}

  \paragraph{Results: Sparsity Ablation}

  Table~\ref{tab:sparsity_comparison} and \cref{fig:sparsity_comparison} present results across sparsity levels. DirectLiNGAM achieves substantially lower MSE for dense models ($p_{\text{sparse}} \leq 0.4$), with up to $4.7\times$ improvement at $p_{\text{sparse}} = 0$. However, FastICA becomes competitive for sparse models ($p_{\text{sparse}} \geq 0.5$), achieving lower MSE while being approximately $270\times$ faster on average (0.56s vs.\ 151.8s).

  \begin{table}[htbp]
  \centering
  \caption{\textbf{FastICA vs.\ DirectLiNGAM across sparsity levels.} ($n = 5{,}000$, $d = 50$, $k = 1$, 20 seeds).}
  \label{tab:sparsity_comparison}
  \begin{tabular}{crrrrrr}
  \toprule
  $p_{\text{sparse}}$ & \multicolumn{2}{c}{MSE} & \multicolumn{2}{c}{Runtime (s)} & Winner \\
  \cmidrule(lr){2-3} \cmidrule(lr){4-5}
   & FastICA & DirectLiNGAM & FastICA & DirectLiNGAM & \\
  \midrule
  0.0 & 0.064 & \textbf{0.014} & 1.40 & 365.5 & DirectLiNGAM \\
  0.1 & 0.091 & \textbf{0.034} & 1.08 & 359.1 & DirectLiNGAM \\
  0.2 & 0.085 & \textbf{0.076} & 0.94 & 277.3 & DirectLiNGAM \\
  0.3 & 0.114 & \textbf{0.068} & 0.40 & 149.7 & DirectLiNGAM \\
  0.4 & 0.122 & \textbf{0.100} & 0.37 & 78.6 & DirectLiNGAM \\
  0.5 & \textbf{0.085} & 0.118 & 0.42 & 61.6 & FastICA \\
  0.6 & \textbf{0.085} & 0.154 & 0.39 & 62.8 & FastICA \\
  0.7 & \textbf{0.109} & 0.148 & 0.18 & 52.3 & FastICA \\
  0.8 & \textbf{0.102} & 0.148 & 0.19 & 55.4 & FastICA \\
  0.9 & \textbf{0.079} & 0.154 & 0.22 & 55.7 & FastICA \\
  \bottomrule
  \end{tabular}
  \end{table}

  \begin{figure}[htbp]
      \centering
      \begin{subfigure}[b]{0.48\textwidth}
          \includesvg[width=\textwidth]{figures/ica/ica_vs_directlingam_sparsity_n5000.svg}
          \caption{MSE comparison}
          \label{fig:sparsity_mse}
      \end{subfigure}
      \hfill
      \begin{subfigure}[b]{0.48\textwidth}
          \includesvg[width=\textwidth]{figures/ica/ica_vs_directlingam_sparsity_runtime_n5000.svg}
          \caption{Runtime comparison}
          \label{fig:sparsity_runtime}
      \end{subfigure}
      \caption{\textbf{FastICA vs.\ DirectLiNGAM across sparsity levels.} ($n = 5{,}000$, $d = 50$). (a) Mean squared error with standard error bars. DirectLiNGAM dominates for dense models ($p_{\text{sparse}} \leq 0.4$); FastICA excels for sparse models. (b) Runtime comparison showing DirectLiNGAM's significant computational overhead.}
      \label{fig:sparsity_comparison}
  \end{figure}

%

  \paragraph{Results: Dimensionality Scaling}

  The relative performance depends critically on the covariate dimension $d$ (Table~\ref{tab:dimension_comparison} and \cref{fig:dimension_comparison}). DirectLiNGAM dominates for low-dimensional settings ($d \leq 10$), achieving up to $2.7\times$ lower MSE at $d = 2$. However, FastICA becomes preferable for high-dimensional data ($d \geq 20$), where DirectLiNGAM's MSE degrades while its runtime increases dramatically (35.5s at $d = 50$ vs.\ 0.22s for FastICA).

  \begin{table}[htbp]
  \centering
  \caption{\textbf{FastICA vs.\ DirectLiNGAM across covariate dimensions.} ($n = 1{,}000$, $k = 1$, 20 seeds).}
  \label{tab:dimension_comparison}
  \begin{tabular}{crrrrrr}
  \toprule
  $d$ & \multicolumn{2}{c}{MSE} & \multicolumn{2}{c}{Runtime (s)} & Winner \\
  \cmidrule(lr){2-3} \cmidrule(lr){4-5}
   & FastICA & DirectLiNGAM & FastICA & DirectLiNGAM & \\
  \midrule
  2 & 0.037 & \textbf{0.014} & 0.007 & 0.027 & DirectLiNGAM \\
  5 & 0.059 & \textbf{0.014} & 0.006 & 0.101 & DirectLiNGAM \\
  10 & 0.061 & \textbf{0.024} & 0.012 & 0.567 & DirectLiNGAM \\
  20 & \textbf{0.082} & 0.086 & 0.032 & 2.852 & FastICA \\
  50 & \textbf{0.144} & 0.231 & 0.224 & 35.524 & FastICA \\
  \bottomrule
  \end{tabular}
  \end{table}

  \begin{figure}[htbp]
      \centering
      \begin{subfigure}[b]{0.48\textwidth}
          \includesvg[width=\textwidth]{figures/ica/ica_vs_directlingam_n_covariates_n1000.svg}
          \caption{MSE vs.\ dimension}
          \label{fig:dimension_mse}
      \end{subfigure}
      \hfill
      \begin{subfigure}[b]{0.48\textwidth}
          \includesvg[width=\textwidth]{figures/ica/ica_vs_directlingam_n_covariates_runtime_n1000.svg}
          \caption{Runtime vs.\ dimension}
          \label{fig:dimension_runtime}
      \end{subfigure}
      \caption{\textbf{FastICA vs.\ DirectLiNGAM across covariate dimensions.} ($n = 1{,}000$, $k = 1$). (a) DirectLiNGAM dominates for $d \leq 10$; FastICA becomes preferable for $d \geq 20$. (b) DirectLiNGAM runtime scales cubically with dimension.}
      \label{fig:dimension_comparison}
  \end{figure}

%

  \paragraph{Results: Distribution Shape, Sample Size, and Multiple Treatments}

  Three additional ablations show broadly consistent patterns (\cref{tab:collapsed_comparison}).
  Across all tested \textbf{sample sizes} ($n \in \{100, \ldots, 5{,}000\}$) and \textbf{number of treatments} ($k \in \{1, \ldots, 5\}$), DirectLiNGAM uniformly achieves $2$--$2.4\times$ lower MSE than FastICA with moderate runtime overhead ($3$--$4\times$ slower).
  For \textbf{distribution shape} ($\beta \in \{0.5, \ldots, 5.0\}$), FastICA wins only for heavily non-Gaussian sources ($\beta = 0.5$, MSE $0.052$ vs.\ $0.120$); DirectLiNGAM dominates for all $\beta \geq 1$.

  \begin{table}[htbp]
  \centering
  \caption{\textbf{Summary of three ablations where DirectLiNGAM wins broadly.} Ranges report (min, max) MSE across the swept parameter; $k = 1$ and $\beta = 1$ unless swept. All experiments use 20 seeds.}
  \label{tab:collapsed_comparison}
  \begin{tabular}{lcccl}
  \toprule
  Ablation & Swept range & FastICA MSE & DirectLiNGAM MSE & Winner \\
  \midrule
  Distribution ($\beta$) & $0.5$--$5.0$ & 0.052--1.863 & 0.106--0.506 & Mixed$^*$ \\
  Sample size ($n$) & $100$--$5{,}000$ & 0.031--0.596 & 0.014--0.249 & DirectLiNGAM \\
  Treatments ($k$) & $1$--$5$ & 0.054--0.161 & 0.033--0.079 & DirectLiNGAM \\
  \bottomrule
  \multicolumn{5}{l}{\footnotesize $^*$FastICA wins only at $\beta = 0.5$ (heavy-tailed); DirectLiNGAM dominates for $\beta \geq 1$.}
  \end{tabular}
  \end{table}

\subsection{Summary}

  Our comprehensive comparison reveals complementary strengths:
  \begin{itemize}
      \item \textbf{DirectLiNGAM} achieves superior accuracy for low-dimensional, dense, moderately non-Gaussian settings, but scales poorly with dimension ($O(d^3)$ complexity).
      \item \textbf{FastICA} excels for sparse, high-dimensional, or heavily non-Gaussian data, with consistent sub-second runtime regardless of problem structure.
      \item The crossover point occurs around $d \approx 15$--$20$ covariates or $p_{\text{sparse}} \approx 0.5$ sparsity.
  \end{itemize}

\end{document}